\DeclareMathOperator*{\argmin}{arg\,min}
\newtheorem{theorem}{Theorem}
\newtheorem*{remark}{Remark}
\newtheorem{lemma}{Lemma}
\newtheorem{definition}{Definition}
\newcommand{\PreserveBackslash}[1]{\let\temp=\\#1\let\\=\temp}
\newcolumntype{C}[1]{>{\PreserveBackslash\centering}p{#1}}
\newcolumntype{R}[1]{>{\PreserveBackslash\raggedleft}p{#1}}
\newcolumntype{L}[1]{>{\PreserveBackslash\raggedright}p{#1}}
\title{Solving General Elliptical Mixture Models through an Approximate Wasserstein Manifold}
\author{Shengxi Li\textsuperscript{\rm 1}\thanks{We thank the Imperial Lee Family Scholarship Funding for the support.}, Zeyang Yu\textsuperscript{\rm 1}, Min Xiang\textsuperscript{\rm 1}, Danilo Mandic\textsuperscript{\rm 1}\\ 
\textsuperscript{\rm 1}Imperial College London\\ 
South Kensington Campus\\
London SW7 2AZ, UK\\
\{shengxi.li17, z.yu17, m.xiang13, d.mandic\}@imperial.ac.uk 
}
\begin{document}

\maketitle

\begin{abstract}
We address the estimation problem for general finite mixture models, with a particular focus on the elliptical mixture models  (EMMs).  Compared to the widely adopted Kullback–Leibler divergence, we show that the Wasserstein distance provides a more desirable optimisation space. We thus provide a stable solution to the EMMs that is both robust to initialisations and reaches a superior optimum by adaptively optimising along a manifold of an approximate Wasserstein distance. To this end, we first provide a unifying account of computable and identifiable EMMs, which serves as a basis to rigorously address the underpinning optimisation problem. Due to a probability constraint, solving this problem is extremely cumbersome and unstable, especially under the Wasserstein distance. To relieve this issue, we introduce an efficient optimisation method on a statistical manifold defined under an approximate Wasserstein distance, which allows for explicit metrics and computable operations, thus significantly stabilising and improving the EMM estimation. We further propose an adaptive method to accelerate the convergence. Experimental results demonstrate the excellent performance of the proposed EMM solver. 
\end{abstract}

\section{Introduction}
\begin{figure*}[!htb]
	\begin{center}
		\subcaptionbox{Simple GMM}{\includegraphics[width=0.14\textwidth]{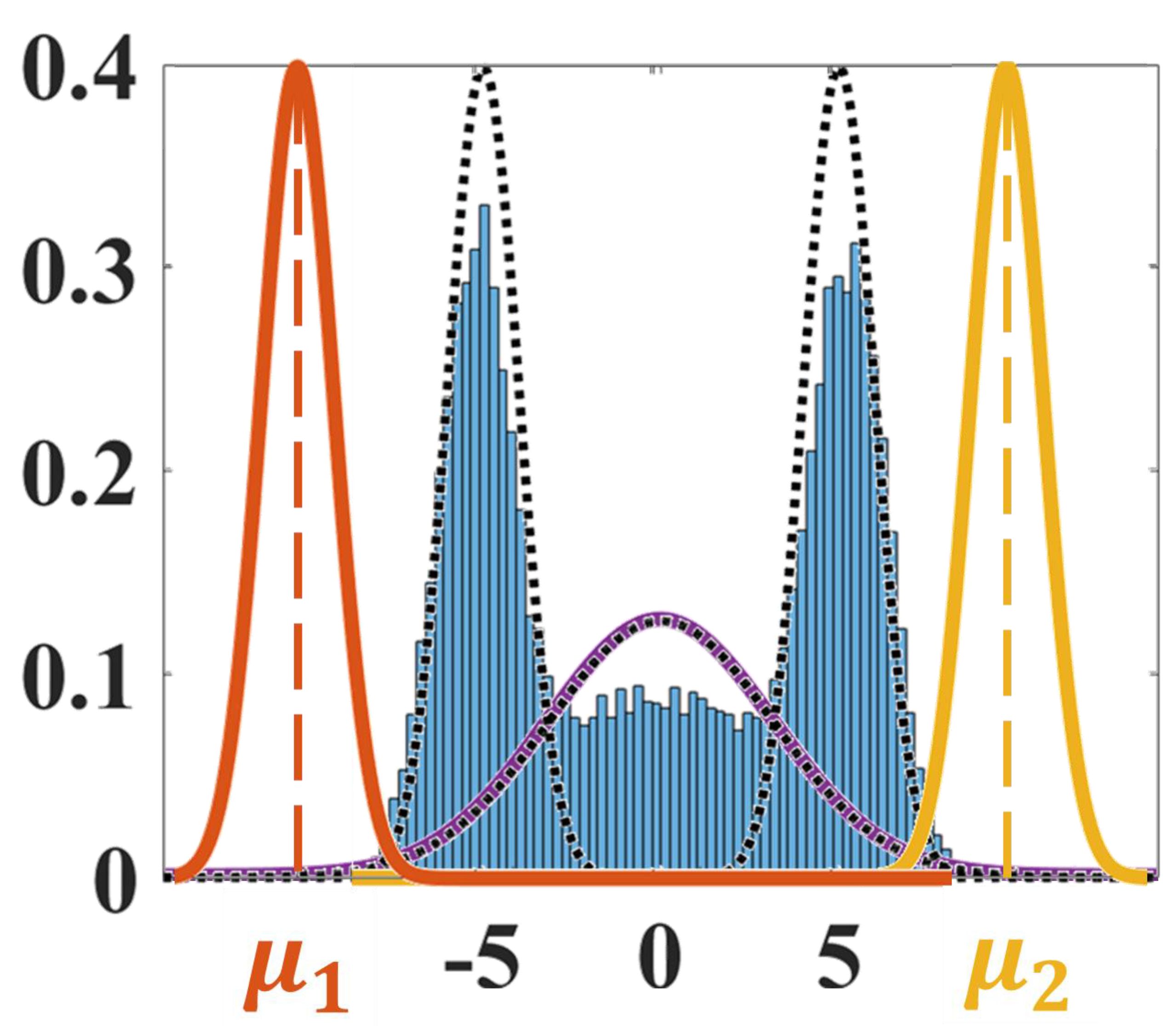}}\hspace{1.5em}
		\subcaptionbox{KL cost space}{\includegraphics[width=0.33\textwidth]{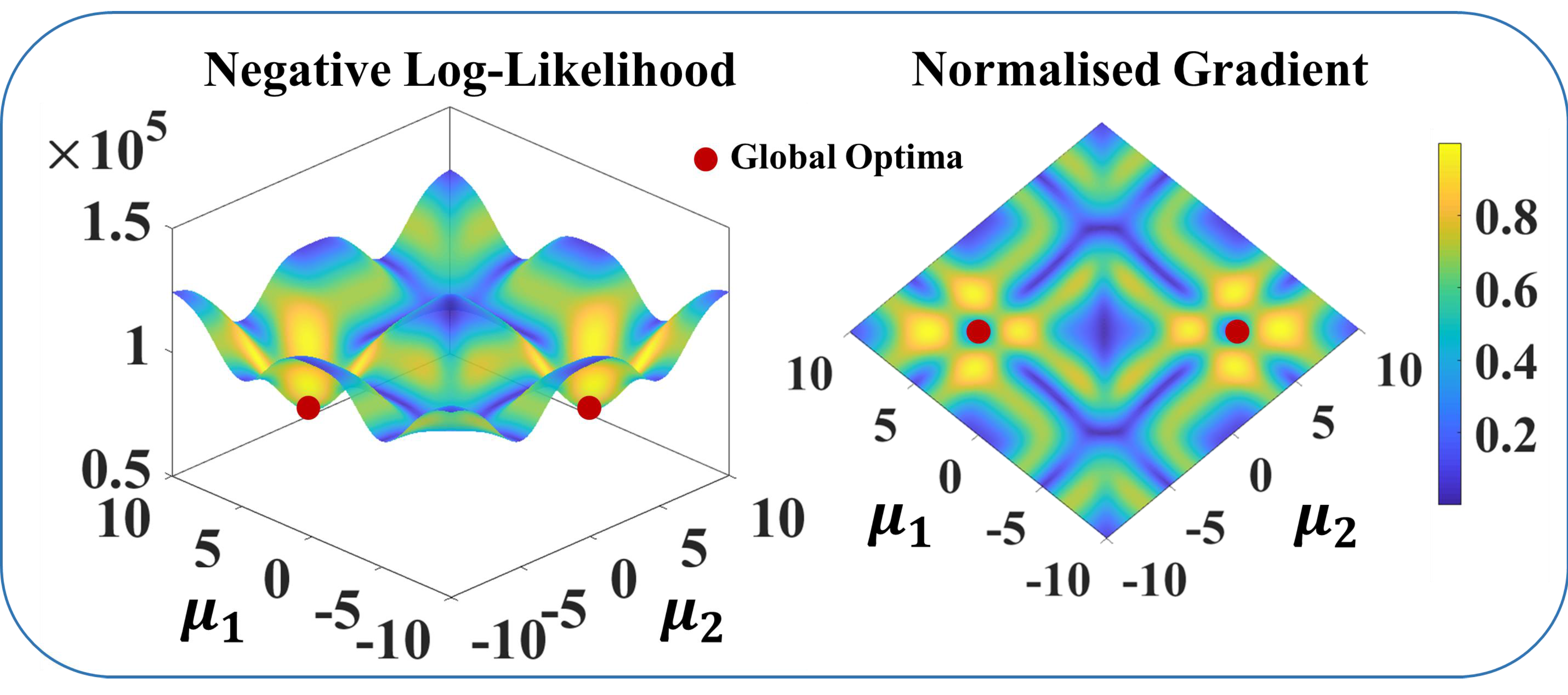}}\hspace{1.5em}
		\subcaptionbox{Wasserstein cost space}{\includegraphics[width=0.33\textwidth]{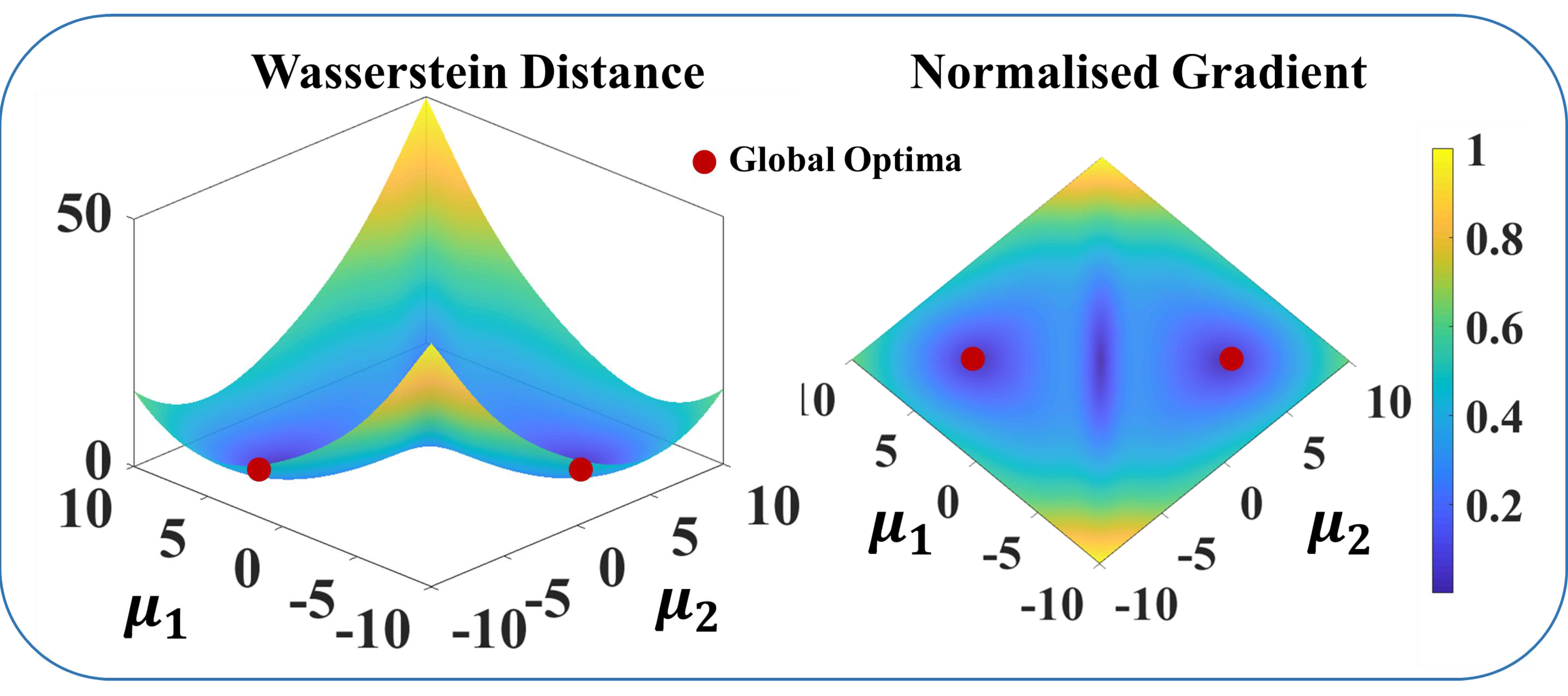}}
	\end{center}
	\caption{Illustration of a simple one-dimensional GMM with 3 clusters. (a): The ground-truth of clusters is plotted via the black dotted lines; Parametric models $\rho(\bm{\theta})$ are denoted as colourful lines; We set ALL the parameters $\bm{\theta}$ of the GMM to the ground-truth EXCEPT the mean values of only two clusters, i.e., $\mu_1$ and $\mu_2$. (b): The KL divergence and the normalised gradient versus the variance of $\mu_1$ and $\mu_2$. (c): The Wasserstein distance and the normalised gradient versus the variance of $\mu_1$ and $\mu_2$.}\label{Motiva_Illus}
\end{figure*}
This paper establishes a general solution to the finite mixture model problem, which has been attracting extensive research effort for decades, due to both its simple representation and potential for universal approximation on arbitrary distributions in $\mathbb{R}^M$. The finite mixture model also provides interpretable and statistical descriptions of data, which makes it a popular choice in a wide range of statistical learning paradigms, such as semi-supervised learning, capsule networks, and various image processing paradigms (e.g., de-noising, matching and registration). 

The estimation on a finite mixture model boils down to a minimisation problem which considers the mixture of distributions as a parametric model $\rho(\bm{\theta})$, which is then optimised through a minimisation of a certain discrepancy measure between $\rho(\bm{\theta})$ and empirical distributions of observed data $\rho^*$, namely, $\min_{\bm\theta} d\left(\rho(\bm{\theta}), \rho^*\right)$. This minimisation, although not explicitly stated, is a constrained problem because $\rho(\bm{\theta})$ must maintain the property of a probability density throughout, to ensure that $d(\cdot,\cdot)$ is tractable. 

Due to this probability constraint, various advanced numerical algorithms (solvers) have been typically restricted by either the requirement of an increasingly flexible $\rho(\bm{\theta})$ or a powerful $d(\cdot,\cdot)$. Such  restrictions, for example, are one of the main rationales for using the expectation-maximisation (EM) algorithm to minimise the Kullback–Leibler (KL) divergence in Gaussian mixture model (GMM) problems \cite{xu1996convergence}. On the other hand,  gradient-based numerical algorithms typically rest upon additional techniques that only work in particular situations (e.g., gradient reduction \cite{redner1984mixture}, positive definite projection \cite{hosseini2015matrix}, re-parametrisation \cite{jordan1994hierarchical} and Cholesky decomposition \cite{naim2012convergence}). Besides the GMM, there exist other solutions that allow for flexible choices of $\rho(\bm{\theta})$, which still belong to the EM-type methods (e.g., mixtures of t-distributions \cite{peel2000robust}, Laplace distributions \cite{tan2007multivariate} and hyperbolic distributions \cite{browne2015mixture}). Unfortunately, given other suitable candidates of distributions, those EM-type methods cannot ensure universal convergence \cite{kent1991redescending,zhang2013multivariate,sra2013geometric}, which dramatically limits the power of finite mixture models.

Another issue that has been highlighted in the literature is the sensitivity to initialisations when solving GMMs \cite{xu2016global,jin2016local}. One of the main reasons is due to the use of KL divergence, which operates based on a ``bin-to-bin'' comparison between two density histograms. This means that mixtures which fall into a spurious local minimum cannot be corrected via the points outside. Indeed, with random initialisations for the GMM, Jin has proved that the EM algorithm or any other first-order method which minimises the KL divergence are highly likely to result in arbitrary bad local minima \cite{jin2016local}. This can also be easily verified from the non-smooth optimization space with various local optimum of the KL divergence as illustrated in Fig. \ref{Motiva_Illus}-(b), even for estimating a simple GMM (Fig. \ref{Motiva_Illus}-(a)). The gradients on the space are highly concentrated as well, which also leads to ill-posed gradient descent.

On the other hand, by virtue of the reflection of sample space \cite{kolouri2017optimal} within the Wasserstein distance\footnote{Throughout this paper, the term Wasserstein distance refers to the square-Wasserstein distance.} \cite{monge1781memoire} which employs a ``cross-bin'' comparison, many practical benefits may be achieved in learning tasks \cite{arjovsky2017wasserstein}. This property is particularly appealing in mixture model problems, where it ideally provides a comprehensive distance measure over all possible transport plans. The optimization space of the Wasserstein distance is also shown in Fig. \ref{Motiva_Illus}-(c), of which smoothness is witnessed. Basically, there only exist global optimum in this case. The gradients on this space are also well behavioured, which promises to achieve superior optimisation. Most recently, Kolouri \textit{et al.} \cite{kolouri2018sliced} have adopted the Wasserstein distance for solving GMM problems. However, the aforementioned probability constraint enforces an extremely small stepsize (learning rate) during optimisation. Given the random projections of the sliced Wasserstein distance, this setup leads to extremely slow convergence or even non-convergence (especially in high dimensions as shown in our experiments). At the end of their optimisation, the EM algorithm is still needed to stabilised the algorithm.

\textbf{Motivations and Contributions:}
Despite extensive benefits, optimising the GMM under the Wasserstein distance is still not feasible due to the probability constraint. To address this from a different perspective, 
we resort to the statistical manifold. We should further point out that our work is different from information geometry \cite{amari1998natural}, as directly establishing a manifold in the whole density space of mixture models is absolutely intractable and cumbersome in optimisation \cite{wang2015discriminant}. Another problem within the Wasserstein space is that the geodesic between two mixture models may not lie in mixture models of the same type \cite{chen2018natural}, which leads to non-convergence.

We propose to resolve this problem by introducing an approximation to the Wasserstein distance, followed by establishing a statistical manifold via the so induced distance, which exhibits the desirable property of being complete within mixture models. The subsequent optimisation along this manifold intrinsically satisfies the probability constraint and ensures that the solution resides in the same mixture models. More importantly, minimising the induced distance is shown to be truly reducing the discrepancy between two mixture models, unlike most existing solutions which are based on the minimisation of the Euclidean distance from the optimal parameters. This ensures fast and stable convergence in optimisation. By realising that the existing Riemannian adaptive algorithms only make sense in updating vector parameters, we further develop a novel accelerated stochastic gradient descent method for updating the positive definite matrices. 

In this way, our proposed framework makes it possible to incorporate a broad family of distributions, $\rho(\bm{\theta})$, including an important class of multivariate analysis techniques called elliptical distributions \cite{fang2018symmetric} and to further investigate the mixture family termed the elliptical mixture model (EMM). We therefore provide computable and identifiable EMMs in a unified way, which demonstrates that EMMs are quite general and flexible and include the GMMs as special cases \cite{fang2018symmetric}.

Overall, this paper proposes a complete and efficient framework for solving general EMM problems, by establishing a statistical manifold under an approximate Wasserstein distance which promotes stability and efficiency, together with an adaptive stochastic gradient algorithm to further accelerate the optimisation\footnote{The code of this paper is available at https://github.com/ShengxiLi/wass$\_$emm}. Compared to the existing literature on mixture problems, the proposed solution achieves consistently superior performance not only in the GMM problems but also for general EMM problems. Our contributions can be summarised as follows:
\begin{itemize}
	\item A unified framework for dealing with  computable and identifiable EMMs, which introduces a rich choice of candidates for flexible finite mixture models.
	\item Establishment of the statistical manifold through the proposed approximate Wasserstein distance, which provides explicit and complete operations within the manifold.
	\item An Adaptive accelerated Riemannian gradient descent algorithm on the established manifold, to improve the optimisation and accelerate convergence.
\end{itemize}

\section{Computable and Identifiable EMMs}
Elliptical distributions include a wide range of standard distributions, and it therefore comes as no surprise that a unified summary of computable candidates as components in the EMMs is a prerequisite to problem definition and subsequent solutions. A classical summary can be found in Chapter 3 in \cite{fang2018symmetric}; however, despite progress this framework is not general enough as various elliptical distributions are still missing, and more importantly, it involves complicated representations for each type of elliptical distributions. The existing literature also employs different notations and formulations of particular distributions, which may lead to confusion. To this end, we provide a simple and unified framework for summarising the existing computable elliptical distributions via the stochastic representation, which can then be used to constitute flexible and identifiable EMMs.


\subsection{Preliminaries on Elliptical Distributions}
A random variable, $\bm{\mathcal{X}}\in \mathbb{R}^m$, is said to exhibit an elliptical distribution if and only if it admits the following stochastic representation \cite{fang2018symmetric},
\begin{equation}\label{RESsto}
\bm{\mathcal{X}} =^d \bm \mu + \mathcal{R}\mathbf{\Lambda}\bm{\mathcal{S}},
\end{equation}
where $\mathcal{R} \in \mathbb{R}^+$ is a non-negative real scalar random variable which models tail properties of the elliptical distribution; $\bm{\mathcal{S}} \in \mathbb{S}^{(m'-1)}$ is a random vector which is uniformly distributed on a unit spherical surface\footnote{The term $\mathbb{S}^{m'-1}$ is defined as $\mathbb{S}^{m'-1}:=\{\mathbf{x}\in\mathbb{R}^{m'}: \mathbf{x}^T\mathbf{x}=1\}$.} with the pdf within the class of $2\pi^{\nicefrac{-m'}{2}}\Gamma(\nicefrac{m'}{2})$; $\bm \mu \in \mathbb{R}^m$ is a mean (location) vector, while $\mathbf{\Lambda} \in \mathbb{R}^{m \times m'}$ is a matrix that transforms $\bm{\mathcal{S}}$ from a sphere to an ellipse, and  ``$=^d$" designates ``the same distribution''. 
For a comprehensive review of elliptical distributions, we refer to \cite{fang2018symmetric}. 

When $m' = m$, that is, for a non-singular scatter matrix $\mathbf{\Sigma} = \mathbf{\Lambda}\mathbf{\Lambda}^T$, the pdf for elliptical distributions does exist and has the following form
\begin{equation}\label{RES}
p(\mathbf{x}) \!=\!\underbrace{2\pi^{-\frac{m}{2}}\Gamma(\frac{m}{2})}_{c_m} \mathrm{det}(\mathbf{\Sigma})^{-\frac{1}{2}}g(\underbrace{(\mathbf{x} - \bm{\mu})^T \mathbf{\Sigma}^{-1} (\mathbf{x} - \bm{\mu})}_{t}).
\end{equation}
In \eqref{RES}, the term $c_{m}$ serves as a normalisation term and relates solely to $m$. We denote the Mahalanobis distance $(\mathbf{x} - \bm{\mu})^T \mathbf{\Sigma}^{-1} (\mathbf{x} - \bm{\mu})$ by $t$. The density generator, $g(t)$, can be explicitly expressed as $t^{-\nicefrac{(m-1)}{2}}p_\mathcal{R}(\sqrt{t})$, where $t>0$ and $p_\mathcal{R}(t)$ denotes the pdf of $\mathcal{R}$. Thus, $\mathcal{R}$, or equivalently\footnote{The term $\mathcal{R}^2$ is frequently used in practice because $\mathcal{R}^2 =^d (\mathbf{x} - \bm{\mu})^T \mathbf{\Sigma}^{-1} (\mathbf{x} - \bm{\mu})$.} $\mathcal{R}^2$, fully characterises $g(\cdot)$, i.e., the type of elliptical distributions. For example, when $\mathcal{R}^2=^d {\chi_m^2}$ ($\chi_m^2$ denotes the chi-squared distribution of dimension $m$), then in \eqref{RES}, $g(t) \propto \mathrm{exp}(-\nicefrac{t}{2})$, which formulates the multivariate Gaussian distribution. Therefore, the elliptical distribution can be fully characterised by $\bm{\mu}$, $\mathbf{\Sigma}$ and $\mathcal{R}$.
For simplicity, the elliptical distribution in \eqref{RES} will be denoted by $\bm{\mathcal{X}}\sim\mathcal{E}(\mathbf{x};\bm{\mu}, \mathbf{\Sigma}, \mathcal{R})$, where $\mathcal{E}(\mathbf{x};\bm{\mu}, \mathbf{\Sigma}, \mathcal{R}) = c_m\mathrm{det}(\mathbf{\Sigma})^{-1/2}g(t)$ of \eqref{RES}. 


\subsection{Computable and Identifiable EMMs}\label{secsummaryEMM}
Due to the fact that the $\mathcal{R}^2$ decides the type of elliptical distributions, we here provide a unified summary of elliptical distributions in Table \ref{typicalRES}; this is achieved through stochastic representations in \eqref{RESsto}. This makes it possible to avoid complicated formulations, and to instead classify different categories simply through several typical distributions of $\mathcal{R}^2$, which also allows for simple and intuitive sample generations for elliptical distributions. The proof for expressions in this table is provided in Appendix. Uniquely, this further clarifies the commonalities between the members of the elliptical family of distributions. More importantly, constructing an EMM with the candidates in Table \ref{typicalRES} can be easily proved to be identifiable based on Theorem 2 in \cite{holzmann2006identifiability}. It is thus convenient and safe to establish a well-defined EMM by the candidates summarised in Table \ref{typicalRES}.

\begin{table*}[!htb]
	\centering
	\caption{$\mathcal{R}^2 \leftrightarrow$ Computable elliptical distributions}\label{38test}	
		{\scriptsize{\begin{tabular}{|C{1.5cm}| c| l| c|}
				\hline\hline
				{Types} & \multicolumn{1}{c}{$\mathcal{R}^2$} & \multicolumn{1}{l|}{$\leftrightarrow ~~~~~~~~~~~~~~~~~~~~~c_m \cdot g(t)$}  & Typical Multivariate Dist. \\
				\hline
				{\multirow{4}{1.5cm}{\centering \textbf{Kotz Type} \\ Ref[a]}}& \multirow{2}{*}{$\mathcal{R}^2 =^d \mathcal{G}^{1/s},$} & 	\multirow{4}*{$= \left(\frac{\Gamma(\nicefrac{m}{2})sb^{\nicefrac{(2a+m-2)}{(2s)}}}{\Gamma(\nicefrac{(2a+m-2)}{2s})\pi^{\nicefrac{m}{2}}}\right)t^{a-1}\mathrm{exp}(-bt^s)$} & Gamma: $s=1$ \\
				&		&		 &  Weibull: $a = s$\\
				&$\mathcal{G}\sim\mathrm{Ga}(\frac{2a+m-2}{2s}, b)$ & &	Generalised Gaussian: $a = 1$\\
				& $a>1\!-\!\frac{m}{2},b,s\!>\!0$ &		& Gaussian: $a = 1,~ s = 1,~ b = \frac{1}{2}$	\\
				\hline
				{\multirow{4}{1.5cm}{\centering \textbf{Scale Mixture of Normals}}} & \multirow{2}*{Pearson Type VII Ref[a]} & \multirow{4}*{$= \left(\frac{(\pi v)^{-m/2}\Gamma(s)}{\Gamma(s - m/2)}\right)(1+\nicefrac{t}{v})^{-s}$}  & \multirow{2}*{$T$-dist.: $s = \frac{m+v}{2}$}\\
				& & & \\
				&	$\mathcal{K}^{-1}\!\!\sim\!\mathrm{Ga}(s\! -\! \frac{m}{2}\!, \!\frac{v}{2})$,	& &\multirow{2}*{Cauchy: $v = 1$, $s = \frac{m+1}{2}$}\\
				& $v>0$, $s>\nicefrac{m}{2}$& & \\\cline{2-4}
				\multirow{6}{1.5cm}{\centering $\mathcal{R}^2 =^d \mathcal{G}\cdot\mathcal{K}$, $\mathcal{G}\sim\mathrm{Ga}(\frac{m}{2},\frac{1}{2})$, $\mathcal{K}$ has different dist.}& \centering Hyperbolic Type Ref[b] & \multirow{3}*{ $=\left(\frac{({v}/{a})^{\lambda/2}}{(2\pi)^{m/2}\mathrm{BeK}_\lambda(\sqrt{av})}\right)\frac{\mathrm{BeK}_{(\lambda-m/2)}(\sqrt{av + vt})}{(\sqrt{\nicefrac{a}{v}+\nicefrac{t}{v}})^{m/2-\lambda}}$ }  & Inverse-Gaussian: $\lambda = -1/2$\\		
				&$\mathcal{K}\sim \mathrm{GIG}(v,a,\lambda)$ & &$K$-dist. \cite{ollila2012complex}: $a\rightarrow0, ~\lambda>0$\\	
				& $v,a>0,~\lambda\in \mathbb{R}$ & & Laplace: $a\rightarrow0,~\lambda=1,~v=2$\\	 \cline{2-4}
				&  Other Types Ref[c]  & \multirow{2}*{$=\nicefrac{\mathrm{exp}(-t)}{(1+\mathrm{exp}(-t))^2}$}  & \multirow{2}*{Logistic} \\
				& $\sqrt{\mathcal{K}}\sim \partial \mathrm{Kov}(\frac{\mathcal{K}}{2})/\partial \mathcal{K}$ & &\\ \cline{2-4}
				& $\mathcal{K}\sim\mathrm{S\alpha S}({\frac{a}{2}})$, $a\in(0,2)$& $\propto \mathrm{S\alpha S}(a) $ &$\alpha$-stable\\\hline
				\multirow{2}{1.5cm}{\centering \textbf{Pearson Type II} } & \multirow{2}*{$\mathcal{R}^2\sim\mathrm{Beta}(m/2, s)$, $s>1$} & \multirow{2}*{$= \left(\frac{\Gamma(m/2+s)}{\pi^{m/2}\Gamma(s)}\right)(1-t)^{s-1}$, $t\in[0,1]$} & \multirow{2}*{Ref[a]}\\
				&  &  &\\\hline \hline
				{\textbf{Notations:}}&\multicolumn{3}{l|}{{$a,b,s,v,\lambda,\alpha$ are adjustable parameters for different types of dist.; $\mathcal{G}$ and $\mathcal{K}$ are random variables related to $\mathcal{R}^2$; $m$ is the dimension.} }\\
				&\multicolumn{3}{l|}{{Gamma dist.: $\mathrm{Ga}(x,y) = y^x t^{x-1} \mathrm{exp}(-yt)/\Gamma(x)$; Inverse Gaussian dist.: $\mathrm{GIG}(x,y,z)=\frac{(x/y)^{z/2}}{2\mathrm{BeK}_z(\sqrt{xy})}t^{z-1}\mathrm{exp}(-\frac{xt^2+y}{2t})$;}}\\
				&\multicolumn{3}{l|}{{Kolmogorov-Smirnov dist.: $\mathrm{Kov}(x) = 1-2\sum_{n=1}^{\infty}(-1)^{n+1}\mathrm{exp}(-2n^2x^2)$; Beta dist.: $\frac{\Gamma(x+y)}{\Gamma(x)\Gamma(y)}t^{x-1}(1-t)^{y-1}$;}}\\
				&\multicolumn{3}{l|}{{$\mathrm{S\alpha S}(a)$: the symmetric $\alpha$-stable dist. with index $a$; $\mathrm{BeK}_x(y)$: the Bessel function of the third kind; $\Gamma(x)$: the Gamma function.}}\\\hline
				{\textbf{References:}}&\multicolumn{3}{l|}{[a]: \cite{fang2018symmetric}; [b]: \cite{browne2015mixture}; [c]: \cite{andrews1974scale,stefanski1991normal}}\\\hline
		\end{tabular}}}\label{typicalRES}
\end{table*}

\subsection{Problem Statement and Notations}\label{notations}
For generality, we assume that there are $k$ mixtures in the EMM model, latent variables $\mathcal{Z}_i \in \{0,1\}$ are binary, and the probability of choosing the $i$-th mixture is denoted by $p(\mathcal{Z}_i = 1) = \pi_i$, so that $\sum_{i = 1}^k \mathcal{Z}_i = 1$ and $\sum_{i = 1}^k \pi_i = 1$. If we use the random variable $\bm{\mathcal{Y}}$ to denote the EMM, then $\bm{\mathcal{Y}}=^d\sum_{i=1}^{k}\mathcal{Z}_{i}\bm{\mathcal{X}}_{i} \sim \sum_{i=1}^{k}\pi_{i} \mathcal{E}(\mathbf{x}|\bm{\mu}_{i}, \mathbf{\Sigma}_{i}, \mathcal{R})$, and the pdf of $\bm{\mathcal{Y}}$ can be expressed as
\begin{equation}\label{EMMpdf}
p(\mathbf{y}) = \sum_{i=1}^{k} \pi_i c_m \mathrm{det}(\mathbf{\Sigma}_i)^{-\frac{1}{2}}g((\mathbf{y} - \bm{\mu}_i)^T \mathbf{\Sigma}_i^{-1} (\mathbf{y} - \bm{\mu}_i)),
\end{equation}
where the density generator $g(\cdot)$ can be chosen flexibly from Table \ref{typicalRES}, for which the identifiability is ensured \cite{holzmann2006identifiability}. Thus, without ambiguity in the context, we shall  denote the EMM $\bm{\mathcal{Y}}$ through the pdf of \eqref{EMMpdf}, that is, as $\bm{\mathcal{Y}}_{\bm{\theta}}$, where $\bm{\theta} = \{\bm{\pi}, \bm{\mu}_i, \mathbf{\Sigma}_i\}$ ($\bm{\pi} = [\pi_1, \pi_2, \ldots, \pi_k]^T$ and $i = 1,2,\ldots,k$).


\section{Statistical Manifold towards EMMs}\label{secstatis}
As the Wasserstein distance possesses a Riemannian structure \cite{kolouri2017optimal}, it is then natural to treat each EMM as a ``point'' in the manifold, whereby the metric is defined by the Hessian of the Wasserstein distance. 
However, as pointed out by Chen \cite{chen2018natural}, in the Wasserstein space, the geodesic between two points, i.e., two GMMs, does not necessarily belong to the GMM of the same type. A way to solve this problem is to pull back from the whole density space to the parametric space \cite{chen2018natural}. This treatment, nevertheless, is still a toy solution due to two main considerations. The first deficiency is that the metric does not have a closed-form representation, and needs to be numerically obtained for every possible value in advance before the optimisation. This is highly prohibitive, especially for multivariate cases due to the curse of dimensionality. Besides computational intractability, operations such as the exponential mapping and the vector transport in the manifold cannot be well defined as there is typically a second-order differential equation involved to obtain those operations.

We thus propose an approximate Wasserstein distance between two EMMs as a means to define a well-behaved manifold for the EMM problems; this is achieved by the property that the Wasserstein distance of two elliptical distributions is completely and explicitly defined. We then provide the Riemannian metric for the EMM problems according to the defined distance.

\subsection{Approximate Wasserstein Distance between EMMs}
We now focus on the distance between two EMMs $\bm{\mathcal{Y}}_1$ and $\bm{\mathcal{Y}}_2$,
and propose an approximate Wasserstein distance by treating each distribution within an EMM as a ``super-point'' and defining a transport-like distance between those ``super-points''. A rigorous definition is given as follows. 

\begin{definition}\label{def}
	Given two EMMs $\bm{\mathcal{Y}}_1$ and $\bm{\mathcal{Y}}_2$, a discrepancy measure is defined as
	\begin{equation}\label{discdef}
	\begin{aligned}
	d_U(\bm{\mathcal{Y}}_1,\bm{\mathcal{Y}}_2)\! =\! \min_{\gamma(i,j)}&\big( \sum_{i,j}\frac{\gamma(i,j)}{k}d^2_W(\bm{\mathcal{X}}_{i,1},\bm{\mathcal{X}}_{j,2})\\
	&+\arccos(\sum_{i,j}\gamma(i,j)\sqrt{\pi_{i,1}\pi_{j,2}})\big),
	\end{aligned}
	\end{equation}
	where $d^2_W(\bm{\mathcal{X}}_{i,1},\bm{\mathcal{X}}_{j,2})$ is the Wasserstein distance between the elliptical distributions $\bm{\mathcal{X}}_{i,1}$ and $\bm{\mathcal{X}}_{j,2}$. $\gamma(i,j)$ is binary $\in \{0,1\}$; for each $i$ and $j$, $\gamma(i,j)$ satisfies $\sum_{i=1}^{k}\gamma(i,j) = 1$ and $\sum_{j=1}^{k}\gamma(i,j) = 1$.
\end{definition}
\begin{theorem}\label{Theorem1}
	Given two EMMs $\bm{\mathcal{Y}}_1$ and $\bm{\mathcal{Y}}_2$,  the discrepancy measure $d_U(\bm{\mathcal{Y}}_1,\bm{\mathcal{Y}}_2)$ defines a distance.
\end{theorem}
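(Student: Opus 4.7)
The plan is to verify the four metric axioms for $d_U$: non-negativity, symmetry, identity of indiscernibles, and the triangle inequality. Three structural facts will carry most of the proof. First, the binary constraint $\gamma(i,j)\in\{0,1\}$ together with unit row and column sums pins the feasible set to the $k!$ permutation matrices, a set closed under transposition and composition and containing the identity. Second, $d_W$ between two non-singular elliptical distributions is itself a metric on the space of elliptical distributions, inherited from the Wasserstein-2 metric. Third, $\arccos\langle\sqrt{\bm{\pi}_1},\sqrt{\bm{\pi}_2}\rangle$ is the Fisher--Rao / spherical distance on the positive orthant of $\mathbb{S}^{k-1}$, hence already a metric on the simplex.

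Non-negativity is immediate since $d_W^2\ge 0$ and Cauchy--Schwarz gives $\sum_{i,j}\gamma(i,j)\sqrt{\pi_{i,1}\pi_{j,2}}\in[0,1]$, placing the arccos in $[0,\pi/2]$. Symmetry follows because $\gamma^{T}$ is a permutation whenever $\gamma$ is, and both summands are symmetric in $(i,j)$. For the identity of indiscernibles, the hypothesis $\bm{\mathcal{Y}}_1=\bm{\mathcal{Y}}_2$ combined with the identifiability of EMMs built from Table \ref{typicalRES} yields a permutation aligning equal elliptical components and equal weights; choosing $\gamma$ to be that permutation zeroes the $d_W^2$ sum and gives $\sum_i\sqrt{\pi_i\pi_i}=1$, so the arccos vanishes. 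Conversely, $d_U=0$ forces a single $\gamma$ to zero both non-negative summands simultaneously, and then pointwise $d_W=0$ combined with the equality case in Cauchy--Schwarz pins down equal components and equal weights.

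The triangle inequality is where the real work lies. Given $\bm{\mathcal{Y}}_1,\bm{\mathcal{Y}}_2,\bm{\mathcal{Y}}_3$ with optimal matchings $\gamma_{12},\gamma_{23}$, I would take the composed permutation $\gamma_{13}=\gamma_{12}\gamma_{23}$ as a feasible witness for $d_U(\bm{\mathcal{Y}}_1,\bm{\mathcal{Y}}_3)$, so that bounding the outer $\min$ at this specific $\gamma_{13}$ suffices. For the arccos summand, after reindexing the weight-square-root vectors by $\gamma_{12}$ and $\gamma_{13}$ the three arccos values become the pairwise spherical distances among three unit vectors on $\mathbb{S}^{k-1}$, so the required bound reduces to the spherical triangle inequality, which is standard. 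The delicate step is the first summand: lifting the component-wise triangle inequality $d_W(\bm{\mathcal{X}}_{i,1},\bm{\mathcal{X}}_{\gamma_{13}(i),3})\le d_W(\bm{\mathcal{X}}_{i,1},\bm{\mathcal{X}}_{\gamma_{12}(i),2})+d_W(\bm{\mathcal{X}}_{\gamma_{12}(i),2},\bm{\mathcal{X}}_{\gamma_{13}(i),3})$ to a comparison of the uniformly-weighted sums $\tfrac{1}{k}\sum_i d_W^2$ is not immediate, because squared distances do not generally obey a triangle inequality. The natural route is the Minkowski inequality on the vector of matched $d_W$ values in $\mathbb{R}^k$, which yields a square-root form of the desired bound. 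The principal obstacle, in my view, is reconciling that Minkowski-type estimate with the additive, un-square-rooted combination of the two summands in the definition of $d_U$; closing this gap may require either sharpening the coupling between the Wasserstein and arccos contributions through the common minimising $\gamma$, or accepting that $d_U$ behaves as a metric only after an appropriate re-normalisation that absorbs the missing cross term.
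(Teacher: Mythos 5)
Your overall strategy --- permutation matrices as the feasible set, the composed permutation $\gamma_{13}=\gamma_{12}\gamma_{23}$ as a feasible witness for the outer minimum, the spherical triangle inequality for the $\arccos$ term, and componentwise reduction for the Wasserstein term --- is exactly the route the paper takes in its proof of \eqref{proofdistnace}. The difference is that you stop and flag the step the paper silently asserts: the paper claims its first inequality holds ``due to the fact that both $d^2_W(\bm{\mathcal{X}}_{i,1},\bm{\mathcal{X}}_{j,2})$ and $\arccos(\cdot)$ satisfy the triangle property,'' but the \emph{squared} Wasserstein distance does not satisfy the triangle inequality, and neither does a uniformly weighted sum of squares of matched distances. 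Your Minkowski observation is the correct diagnosis: from the componentwise bound $v_{13,i}\le v_{12,i}+v_{23,i}$ one obtains $\|v_{13}\|_2\le\|v_{12}\|_2+\|v_{23}\|_2$, hence $\|v_{13}\|_2^2\le\|v_{12}\|_2^2+\|v_{23}\|_2^2+2\|v_{12}\|_2\|v_{23}\|_2$, and the cross term cannot be discarded.

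The obstacle you identify is therefore not a defect of your write-up so much as a genuine gap in the statement as proved. Concretely, take $k=1$ and three single-component ``mixtures'' that are Gaussians with identical covariance and collinear means at $0$, $1$ and $2$: the $\arccos$ term vanishes, $d_U(\bm{\mathcal{Y}}_1,\bm{\mathcal{Y}}_3)=4$, yet $d_U(\bm{\mathcal{Y}}_1,\bm{\mathcal{Y}}_2)+d_U(\bm{\mathcal{Y}}_2,\bm{\mathcal{Y}}_3)=2$, so the triangle inequality fails; the paper's own Lemma \ref{Lemma1} notes that $d_U=d^2_W$ when $k=1$, which makes the failure immediate. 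Your suggested repair --- letting the Wasserstein contribution enter through $\bigl(\tfrac{1}{k}\sum_i d^2_W\bigr)^{1/2}$ or otherwise absorbing the cross term --- is the right direction: with the square root in place your Minkowski step closes the argument, and the remainder of your proof (non-negativity, symmetry via transposition of the permutation, identity of indiscernibles via identifiability of the EMM components) goes through and matches the paper's. As written, however, neither your argument nor the paper's establishes the triangle inequality for $d_U$ itself, and you were right not to paper over that step.
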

\begin{proof}
	Please see Appendix. 
\end{proof}

We provide our intuitions of defining the distance of \eqref{discdef}.  The term $\arccos(\sum_{i,j}\gamma(i,j)\sqrt{\pi_{i,1}\pi_{j,2}})$ intrinsically relates to the probability constraint of $\sum_{i = 1}^k \pi_{i,1} = \sum_{j = 1}^k \pi_{j,2} = 1$, while $\gamma(i,j)$ operates as a bijection between mixture components in $\bm{\mathcal{Y}}_1$ and $\bm{\mathcal{Y}}_2$.  $\sum_{i,j}\frac{\gamma(i,j)}{k}d^2_W(\bm{\mathcal{X}}_{i,1},\bm{\mathcal{X}}_{j,2})$ can thus be regarded as a discrete transport between $k$ uniformly distributed ``super-points'' $\bm{\mathcal{X}}_{i,1}$ and $\bm{\mathcal{X}}_{j,2}$, for  which their cost is defined as 
$d^2_W(\bm{\mathcal{X}}_{i,1},\bm{\mathcal{X}}_{j,2})$. We also noticed a computational distance proposed by \cite{chen2018optimal}. Our metric can be further seen as a restriction (or upper bound) of the Chen's metric because we operate via a one-to-one transport whilst the Chen's metric admits arbitrary transport plans. However, the most advantageous property of our metric is that it defines an explicit manifold over EMMs, which will be introduced shortly.

More importantly, $d_U(\bm{\mathcal{Y}}_1,\bm{\mathcal{Y}}_2)$ comprehensively reflects the discrepancy between $\bm{\mathcal{Y}}_1$ and $\bm{\mathcal{Y}}_2$ via a summation operation; this means $d_U(\bm{\mathcal{Y}}_1,\bm{\mathcal{Y}}_2)=0$ if and only if both the difference between mixture components $\bm{\mathcal{X}}_{i,1}$ and $\bm{\mathcal{X}}_{j,2}$ and the difference between latent variables $\pi_{i,1}$ and $\pi_{j,2}$ equal to 0. The following lemma further proves that for balanced EMMs, $d_U(\bm{\mathcal{Y}}_1,\bm{\mathcal{Y}}_2)$ is an upper bound of the Wasserstein distance $d^2_W(\bm{\mathcal{Y}}_1,\bm{\mathcal{Y}}_2)$.

\begin{lemma}\label{Lemma1}
	Given two balanced EMMs $\bm{\mathcal{Y}}_1$ and $\bm{\mathcal{Y}}_2$ (i.e., $\pi_{i,1} = \pi_{j,2} = \nicefrac{1}{k}$ for all $i,j$), $d_U(\bm{\mathcal{Y}}_1,\bm{\mathcal{Y}}_2)$ is an upper bound of the Wasserstein distance:
	\begin{equation}
	d^2_W(\bm{\mathcal{Y}}_1,\bm{\mathcal{Y}}_2) \leq d_U(\bm{\mathcal{Y}}_1,\bm{\mathcal{Y}}_2).
	\end{equation}
	The equality holds when $k=1$.	
\end{lemma}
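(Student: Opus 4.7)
The plan is to exploit the balanced assumption to first collapse the $\arccos$ term, and then to construct an explicit transport plan for $d_W^2(\bm{\mathcal{Y}}_1,\bm{\mathcal{Y}}_2)$ by ``gluing together'' the optimal elliptical-to-elliptical transports according to a feasible $\gamma$. Since the Wasserstein distance is the infimum over admissible couplings, this constructed plan will automatically furnish an upper bound, and minimising over $\gamma$ will finish the argument.

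First I would handle the $\arccos$ term. Because $\gamma(i,j)\in\{0,1\}$ is constrained to be a permutation (each row and column sums to $1$), we have $\sum_{i,j}\gamma(i,j)=k$. Under the balanced assumption $\pi_{i,1}=\pi_{j,2}=1/k$, the inner product becomes $\sum_{i,j}\gamma(i,j)\sqrt{\pi_{i,1}\pi_{j,2}} = \sum_{i,j}\gamma(i,j)/k = 1$, so $\arccos(1)=0$. Hence $d_U(\bm{\mathcal{Y}}_1,\bm{\mathcal{Y}}_2)$ reduces to $\min_{\gamma}\sum_{i,j}\tfrac{\gamma(i,j)}{k}d_W^2(\bm{\mathcal{X}}_{i,1},\bm{\mathcal{X}}_{j,2})$.

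Next, for each pair $(i,j)$ with $\gamma(i,j)=1$, let $\Pi_{ij}^\star(\mathbf{x},\mathbf{y})$ denote an optimal Wasserstein coupling between the components $\bm{\mathcal{X}}_{i,1}$ and $\bm{\mathcal{X}}_{j,2}$, so that its $x$- and $y$-marginals are the component densities and $\int\|\mathbf{x}-\mathbf{y}\|^2\,d\Pi_{ij}^\star = d_W^2(\bm{\mathcal{X}}_{i,1},\bm{\mathcal{X}}_{j,2})$. I would define the candidate coupling for the full mixtures as
\begin{equation}
\Pi_\gamma(\mathbf{x},\mathbf{y}) \;=\; \frac{1}{k}\sum_{i,j}\gamma(i,j)\,\Pi_{ij}^\star(\mathbf{x},\mathbf{y}).
\end{equation}
Using the row/column constraints on $\gamma$ and the balanced weights, a direct marginalisation shows that the $x$-marginal equals $\tfrac{1}{k}\sum_i \bm{\mathcal{X}}_{i,1} = \bm{\mathcal{Y}}_1$ and similarly the $y$-marginal equals $\bm{\mathcal{Y}}_2$, so $\Pi_\gamma$ is an admissible transport plan between $\bm{\mathcal{Y}}_1$ and $\bm{\mathcal{Y}}_2$. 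Computing its transport cost gives exactly $\tfrac{1}{k}\sum_{i,j}\gamma(i,j)d_W^2(\bm{\mathcal{X}}_{i,1},\bm{\mathcal{X}}_{j,2})$.

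Finally, since $d_W^2(\bm{\mathcal{Y}}_1,\bm{\mathcal{Y}}_2)$ is the infimum over all admissible couplings, it is bounded above by the cost of $\Pi_\gamma$ for every feasible $\gamma$; minimising over $\gamma$ yields the claimed inequality $d_W^2(\bm{\mathcal{Y}}_1,\bm{\mathcal{Y}}_2)\le d_U(\bm{\mathcal{Y}}_1,\bm{\mathcal{Y}}_2)$. For the equality case $k=1$, the only admissible $\gamma$ forces a single pair and $d_U$ degenerates to the Wasserstein distance between the two elliptical components, which is precisely $d_W^2(\bm{\mathcal{Y}}_1,\bm{\mathcal{Y}}_2)$. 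The only delicate point is verifying that the convex combination of optimal component-wise plans is itself a legitimate probabilistic coupling with the correct marginals; this is straightforward once the permutation structure of $\gamma$ and the uniformity of the mixture weights are used carefully, and it is the step that requires the balanced hypothesis — without it, the constructed plan's marginals would not reproduce $\bm{\mathcal{Y}}_1,\bm{\mathcal{Y}}_2$.
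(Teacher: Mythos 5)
Your proposal is correct and follows essentially the same route as the paper: the balanced assumption kills the $\arccos$ term, and the mixture coupling $\frac{1}{k}\sum_{i,j}\gamma(i,j)\Pi^\star_{ij}$ built from the component-optimal plans is exhibited as an admissible coupling of $\bm{\mathcal{Y}}_1$ and $\bm{\mathcal{Y}}_2$ whose cost equals $\sum_{i,j}\frac{\gamma(i,j)}{k}d^2_W(\bm{\mathcal{X}}_{i,1},\bm{\mathcal{X}}_{j,2})$, so the infimum defining $d^2_W(\bm{\mathcal{Y}}_1,\bm{\mathcal{Y}}_2)$ is bounded above by $d_U$. Your write-up is in fact slightly more explicit than the paper's about verifying the marginals via the row/column constraints on $\gamma$, but the underlying argument is identical.
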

\begin{proof}
	Please see Appendix, where the tightness of the upper bound is also analysed.
\end{proof}



\subsection{Statistical Manifold for EMM Problems}
Before introducing the statistical manifold for EMMs, we would like to give credit to several most recent works on Gaussian distributions and elliptical distributions, which lay a basis of our metric proposed in this section. We would like to mention \cite{knott1984optimal} of the Wasserstein distance of Gaussian measures and \cite{muzellec2018generalizing} for the elliptical distributions. Most recently, the Riemannian manifold for Gaussian distributions has been established in an explicit form \cite{takatsu2011wasserstein,malago2018wasserstein}.
Then, on the basis of the approximated Wasserstein distance, we provide the Riemannian metric of the EMM problems by calculating the Hessian of the proposed distance in Definition \ref{def} as follows. 
\begin{lemma}\label{manifold}
	The approximate Wasserstein distance $d_U(\bm{\mathcal{Y}}_1,\bm{\mathcal{Y}}_2)$ represents an explicit Riemannian metric in the parametric space, and the corresponding Riemannian manifold is a product manifold of $ \mathbb{R}^k\times\prod_{i=1}^k(\mathbb{R}^m\times\mathbb{P})$, where $\mathbb{P}$ is the $m\times m$ positive definite manifold:
	
	1) The manifold for the square root of $\pi_i$, i.e., $[\sqrt{\pi}_1, \sqrt{\pi}_2, \cdots, \sqrt{\pi}_k]^T$ is a sphere manifold of $\mathbb{R}^k$.
	
	2) The manifold for $\bm{\mu}_i$ is the Euclidean space of $\mathbb{R}^m$.	
	
	3) The manifold for $\mathbf{\Sigma}_i$, i.e., $\mathbb{P}$, is defined by
	\begin{equation}\label{psdmanifold}
	ds^2 = \frac{\mathbb{E}[\mathcal{R}^2]}{m} (\mathbb{L}_{\mathbf{\Sigma}_i}[d\mathbf{\Sigma}])\mathbf{\Sigma}_i(\mathbb{L}_{\mathbf{\Sigma}_i}[d\mathbf{\Sigma}]).
	\end{equation} 
	In \eqref{psdmanifold}, $\mathbb{L}_\mathbf{A}[\mathbf{C}] = \mathbf{B}$ is a Lyapunov operator: $\mathbf{A}\mathbf{B}+\mathbf{B}\mathbf{A} = \mathbf{C}$, where $\mathbf{A}, \mathbf{B}, \mathbf{C}\in\mathbb{P}$. More importantly, the sectional curvature is non-negative ($=\nicefrac{m}{\mathbb{E}[\mathcal{R}^2]} {k}_{\mathrm{G}}$ where ${k}_{\mathrm{G}}\geq0$ is the sectional curvature for Gaussian cases \cite{takatsu2011wasserstein}). Recall that $\mathcal{R}$ is defined in \eqref{RESsto}. 
\end{lemma}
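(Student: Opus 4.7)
The plan is to compute the Hessian of $d_U$ at the diagonal $\bm{\mathcal{Y}}_1 = \bm{\mathcal{Y}}_2 = \bm{\mathcal{Y}}_{\bm\theta}$ and identify it with the product of three standard Riemannian metrics. First I would argue that for infinitesimal perturbations $\bm\theta \to \bm\theta + d\bm\theta$ the inner minimisation over $\gamma$ is attained by the identity permutation: any other matching $\sigma \neq \mathrm{id}$ keeps $\sum_i d_W^2(\bm{\mathcal{X}}_{i,1}, \bm{\mathcal{X}}_{\sigma(i),2})$ strictly positive to zeroth order (when the components are distinct, which may be assumed by genericity and identifiability) while the $\arccos$ term remains bounded, so the identity is strictly optimal in a neighbourhood of the diagonal. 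With $\gamma$ fixed as the identity, the objective decouples into $\frac{1}{k}\sum_i d_W^2(\bm{\mathcal{X}}_{i,1}, \bm{\mathcal{X}}_{i,2})$ plus $\arccos\bigl(\sum_i \sqrt{\pi_{i,1}\pi_{i,2}}\bigr)$, and its Hessian acts independently on the mixture weights and on each triple $(\bm\mu_i, \mathbf{\Sigma}_i)$.

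For the mixing weights, the substitution $u_i = \sqrt{\pi_i}$ turns the simplex constraint $\sum_i \pi_i = 1$ into $\sum_i u_i^2 = 1$, placing $\mathbf{u} = (u_1,\ldots,u_k)^T$ on the unit sphere of $\mathbb{R}^k$. Since $\sum_i \sqrt{\pi_{i,1}\pi_{i,2}} = \langle \mathbf{u}_1, \mathbf{u}_2\rangle$, the $\arccos$ term is exactly the round geodesic distance between $\mathbf{u}_1$ and $\mathbf{u}_2$, whose infinitesimal second form is the standard spherical metric, establishing part~(1). For each component I would invoke the closed-form Wasserstein distance between two elliptical distributions of the same density generator \cite{muzellec2018generalizing}, namely $d_W^2(\bm{\mathcal{X}}_1,\bm{\mathcal{X}}_2) = \|\bm\mu_1-\bm\mu_2\|^2 + \tfrac{\mathbb{E}[\mathcal{R}^2]}{m}\,\mathrm{tr}\bigl(\mathbf{\Sigma}_1+\mathbf{\Sigma}_2 - 2(\mathbf{\Sigma}_1^{1/2}\mathbf{\Sigma}_2\mathbf{\Sigma}_1^{1/2})^{1/2}\bigr)$, in which the prefactor $\mathbb{E}[\mathcal{R}^2]/m$ arises from the radial variance of the elliptical family and reduces to $1$ for Gaussians. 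The mean contribution yields part~(2), the Euclidean metric on $\mathbb{R}^m$, while the scatter contribution is a constant rescaling of the Bures-Wasserstein squared distance, whose Hessian at $\mathbf{\Sigma}_i$ is known to equal $\mathrm{tr}(\mathbb{L}_{\mathbf{\Sigma}_i}[d\mathbf{\Sigma}]\,\mathbf{\Sigma}_i\,\mathbb{L}_{\mathbf{\Sigma}_i}[d\mathbf{\Sigma}])$ \cite{takatsu2011wasserstein}; multiplying by $\mathbb{E}[\mathcal{R}^2]/m$ then produces \eqref{psdmanifold}.

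For the sectional-curvature claim, the metric \eqref{psdmanifold} is obtained from the classical Bures-Wasserstein metric on $\mathbb{P}$ by a constant rescaling $g \mapsto c\,g$ with $c = \mathbb{E}[\mathcal{R}^2]/m$. Such a rescaling leaves the Levi-Civita connection and hence the Riemann curvature tensor unchanged but inflates the areas of two-planes by $c$, so the sectional curvature transforms as $K \mapsto K/c$; this gives exactly $K = (m/\mathbb{E}[\mathcal{R}^2])\,k_{\mathrm{G}}$, and non-negativity then follows from \cite{takatsu2011wasserstein}. The main obstacle will be the rigorous verification that the identity permutation really attains the inner infimum for infinitesimally close EMMs (so that the Hessian decouples across components), together with carefully tracking the factor $\mathbb{E}[\mathcal{R}^2]/m$ through both the elliptical Wasserstein formula and the curvature rescaling; the sphere and Euclidean parts, by contrast, are essentially immediate.
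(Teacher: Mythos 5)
Your proposal is correct and follows essentially the same route as the paper's proof: compute the Hessian of $d_U$ at the diagonal, observe that it decouples into a product over the mixing weights and the per-component parameters, identify the $\arccos$ term with the round sphere metric on $\sqrt{\bm\pi}$, and read off the Euclidean and (rescaled) Bures--Wasserstein metrics from the closed-form elliptical Wasserstein distance of \cite{muzellec2018generalizing,malago2018wasserstein}. In fact you supply two details the paper leaves implicit --- the justification that the identity matching attains the inner infimum near the diagonal, and the explicit $K \mapsto K/c$ curvature rescaling --- both of which check out.
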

\begin{proof}
	Please see Appendix. 
\end{proof}
\begin{remark}
	The metric in \eqref{psdmanifold} provides a manifold for positive definite matrices. Compared to the best known manifold belonging to the Hadamard manifold (non-positive sectional curvature) \cite{sra2015conic}, the newly developed manifold provides an example of non-negative manifolds. It is actually further stated that an Alexandrov space has the non-negative curvature iff it is a Wasserstein space \cite{sturm2006geometry}. The established metric thus provides a desirable reflection on the curvature when dealing with Wasserstein related EMM problems.
\end{remark}

\begin{table*}[!t]
	\centering
	\caption{Basic operations of the manifold in Lemma \ref{manifold}}
	\scriptsize{\begin{tabular}{c|cc}
			\hline
			\multicolumn{3}{l}{For the $h$-th iteration } \\ \hline
			&  $\nabla_U(\cdot)$   &  	$\mathrm{Exp}(-\alpha\nabla_U(\cdot))$     \\
			$\sqrt{\bm{\pi}}^h$ &  $\nabla_E(\sqrt{\bm{\pi}}^h) - (\sqrt{\bm{\pi}}^{h})^T\nabla_E(\sqrt{\bm{\pi}}^h)\cdot \sqrt{\bm{\pi}}^{h}$   &  	$\cos(||\alpha\nabla_U(\sqrt{\bm{\pi}}^h)||_2)\sqrt{\bm{\pi}}^h - \frac{\sin(||\alpha\nabla_U(\sqrt{\bm{\pi}}^h)||_2)}{||\nabla_U(\sqrt{\bm{\pi}}^h)||_2}\nabla_U(\sqrt{\bm{\pi}}^h)$     \\
			$\bm{\mu}_i^h$ &  $\nabla_E(\bm{\mu}_i^h)$    &  $\bm{\mu}_i^h-\alpha\nabla_U(\bm{\mu}_i^h)$    \\
			$\mathbf{\Sigma}_i^h$ & $\nabla_E(\mathbf{\Sigma}_i^h)\mathbf{\Sigma}_i^h + \mathbf{\Sigma}_i^h\nabla_E(\mathbf{\Sigma}_i^h)$  &   $(\mathbb{L}_{\mathbf{\Sigma}_i^h}[-\alpha\nabla_U(\mathbf{\Sigma}_i^h)] + I)\mathbf{\Sigma}_i^h(\mathbb{L}_{\mathbf{\Sigma}_i^h}[-\alpha\nabla_U(\mathbf{\Sigma}_i^h)] + I)$  \\ \hline  
			\multicolumn{3}{l}{For the Radon transform $\mathbf{p}\in \mathbb{S}^{m-1}$, specified in the EMM problems:} \\ \hline
			{$\nabla_E(\sqrt{\bm{\pi}}^h)$} &\multicolumn{2}{c}{The $i$-th dimension of $\nabla_E(\sqrt{\bm{\pi}}^h)$ is $2\int_\mathbb{R} c_m \phi(y) \sqrt{{\pi}_i}^h (\mathbf{p}^T\mathbf{\Sigma}^h_i\mathbf{p})^{-\nicefrac{1}{2}}g(\frac{(y - \mathbf{p}^T\bm{\mu}_i)^2}{\mathbf{p}^T\mathbf{\Sigma}^h_i\mathbf{p}})dy$}\\
			{$\nabla_E(\bm{\mu}_i^h)$} &\multicolumn{2}{c}{$=\left(-2\int_\mathbb{R}c_m\phi(y)\pi_{i}^h(\mathbf{p}^T\mathbf{\Sigma}^h_i\mathbf{p})^{-\nicefrac{3}{2}}g'(\frac{(y - \mathbf{p}^T\bm{\mu}_i)^2}{\mathbf{p}^T\mathbf{\Sigma}^h_i\mathbf{p}})(y-\mathbf{p}^T\bm{\mu}_i^h)dy\right)\mathbf{p}$}\\
			{$\nabla_E(\mathbf{\Sigma}_i^h)$} &\multicolumn{2}{c}{$=\left(-\int_\mathbb{R}c_m\phi(y)\pi_{i}^h(\mathbf{p}^T\mathbf{\Sigma}^h_i\mathbf{p})^{-\nicefrac{3}{2}}\big(\frac{1}{2}g(\frac{(y - \mathbf{p}^T\bm{\mu}_i^h)^2}{\mathbf{p}^T\mathbf{\Sigma}^h_i\mathbf{p}}) + g'(\frac{(y - \mathbf{p}^T\bm{\mu}_i^h)^2}{\mathbf{p}^T\mathbf{\Sigma}^h_i\mathbf{p}})\frac{(y - \mathbf{p}^T\bm{\mu}_i^h)^2}{\mathbf{p}^T\mathbf{\Sigma}^h_i\mathbf{p}}\big)dy\right)\mathbf{p}\mathbf{p}^T$}\\\hline
			\multicolumn{3}{l}{$\phi(y)$ is the Kantorovich potential \cite{chen2018natural}. $\nabla_E(\cdot)$ denotes the Euclidean gradient with regard to $\sqrt{\bm{\pi}}^h$, $\bm{\mu}_i^h$ and $\mathbf{\Sigma}_i^h$.} \\\hline
		\end{tabular}\label{basciops}
	}
\end{table*}

\section{Adaptively Accelerated Optimisation}
By virtue of optimising on a statistical manifold, the probability constraint can be satisfied automatically; this allows us to incorporate various numerical algorithms when solving the EMM problems. More specifically, we first show that the constrained minimisation problem can be transformed to an unconstrained one when restricted on the statistical manifold, which results in the ``vanilla'' gradient descent on the manifold. We then propose an adaptively accelerated solver.

\subsection{Vanilla Gradient Descent on Statistical Manifold}
Similar to \cite{martens2014new}, for gradient descent methods, the way for gradient descent in our work is given by,
\begin{equation}\label{opt}
\begin{aligned}
\bm{\mathcal{Y}}_{\bm{\theta}+\Delta\bm{\theta}^*}\!\! =& \argmin_{\bm{\mathcal{Y}}_{\bm{\theta}+\Delta\bm{\theta}}}d_{SW}(\bm{\mathcal{Y}}_{\bm{\theta}+\Delta\bm{\theta}},\bm{\mathcal{Y}}^*)\!+\!\frac{d_U(\bm{\mathcal{Y}}_{\bm{\theta}},\bm{\mathcal{Y}}_{\bm{\theta}+\Delta\bm{\theta}})}{c^2},\\
&~~~~~~~~~~~~~~\rightarrow \Delta\bm{\theta}^* = \mathrm{Exp}(-\alpha\nabla_U(\bm{\theta}))
\end{aligned}
\end{equation}
where $\Delta\bm{\theta}$ is the step size for the next iteration and $c^2$ denotes a sphere of all realisable distributions, which ensures searching for optimal $\Delta\bm{\theta}^*$ without being slowed down by the curvature; $d_{SW}(\bm{\mathcal{Y}}_{\bm{\theta}+\Delta\bm{\theta}},\bm{\mathcal{Y}}^*)$ denotes the sliced Wasserstein distance \cite{rabin2011wasserstein} between the parametric mixture model $\bm{\mathcal{Y}}_{\bm{\theta}+\Delta\bm{\theta}}$ and the observed samples $\bm{\mathcal{Y}}^*$. By approximating $d_U(\bm{\mathcal{Y}}_{\bm{\theta}},\bm{\mathcal{Y}}_{\bm{\theta}+\Delta\bm{\theta}})$ with its Hessian term (i.e., the inner product on the tangent space), we naturally obtain an unconstrained Riemannian gradient descent of the second row of \eqref{opt}, where $\nabla_U(\bm{\theta})$ is the Riemannian gradient on the statistical manifold defined by $d_U(\cdot,\cdot)$; $\alpha$ is the stepsize of each iteration; $\mathrm{Exp}(-\alpha\nabla_U(\bm{\theta}))$, called exponential mapping, projects the step movement $-\alpha\nabla_U(\bm{\theta})$ from the tangent space to the statistical manifold along geodesics \cite{absil2009optimization}, which ensures the probability constraint. On the other hand, when the proposed distance $d_U(\bm{\mathcal{Y}}_{\bm{\theta}},\bm{\mathcal{Y}}_{\bm{\theta}+\Delta\bm{\theta}})$ is changed to the Euclidean distance, \eqref{opt} is the trivial gradient descent \cite{kolouri2018sliced}. However, as mentioned above, this gradient descent does not satisfy the probability constraint nor does it reflect the probability space curvature, which leads to inefficient and unstable optimisation. 

In \eqref{opt}, the sliced Wasserstein distance \cite{rabin2011wasserstein} provides a feasible solution in solving the semi-discrete Wasserstein problem, i.e., $\bm{\mathcal{Y}}_{\bm{\theta}+\Delta\bm{\theta}}$ is continuous and $\bm{\mathcal{Y}}^*$ is the sum of Dirac masses \cite{levy2015numerical}. It uses unit random projections to turn the original problem to a one-dimensional Wasserstein problem via the Radon transform \cite{rabin2011wasserstein}, so that a closed-form solution can be obtained\footnote{It needs to be pointed out that other approximate distances (e.g., entropy relaxed Wasserstein distances) for the semi-discrete problem can also be seamlessly adopted in \eqref{opt} of our work.}. 
More importantly, as the sliced Wasserstein distance is composed of a set of random projections \cite{rabin2011wasserstein}, it is then natural to implement a stochastic gradient descent on the Riemannian manifold for each random projection (denoted as $\mathbf{p}\in\mathbb{S}^{m-1}$). The random projection process also allows for parallel gradient descent.
We provide the basic operations of problem in \eqref{opt} in Table \ref{basciops}, while the details are provided in Appendix.

\begin{table}[!t]
	\centering
	\resizebox{\columnwidth}{!}{\begin{tabular}{ccccl}
		\hline
		\multicolumn{5}{l}{\textbf{Alg. 1}: Riemannian adaptively accelerated manifold  optimisation} \\ \hline
		\multicolumn{5}{l}{\textbf{Input}: $n$ observed samples $\mathbf{y}_1, \mathbf{y}_2, \ldots, \mathbf{y}_n$; stepsize $\{\alpha^h\}_{h=1}^H$; } \\ 
		& & \multicolumn{3}{l}{hyper-parameters $\{\beta_{1}^h\}_{h=1}^{H}$ and $\beta_{2}$} \\ 		
		\multicolumn{5}{l}{\textbf{Initialise}: $1^{st}$-order moment $\{\mathbf{u}_i^0\}_{i=1}^k$; $2^{nd}$-order moment $\{\mathbf{v}_i^0\}_{i=1}^k$;} \\
		& \multicolumn{4}{l}{\textbf{for}: $h=1$ to $H$ \textbf{do}} \\	
		& &\multicolumn{3}{l}{Random projection: $\mathbf{p} \in \mathbb{S}^{(m-1)}$} \\		
		& & \multicolumn{3}{l}{Update $\sqrt{\bm{\pi}}^{h+1} = \mathrm{Exp}(-\alpha\nabla_U(\sqrt{\bm{\pi}}^h))$}\\
		& & \multicolumn{3}{l}{\textbf{for}: $i=1$ to $k$ \textbf{do}} \\	
		& & & \multicolumn{2}{l}{Update $\bm{\mu}_i^{h+1} = \mathrm{Exp}(-\alpha\nabla_U(\bm{\mu}_i^h))$}\\
		& & & & Update $\mathbf{\Sigma}_i^h$ by the \textit{Dadam}: \\
		& & & & $\mathbf{u}_i^h = \beta_{1}^h\varphi_{\mathbf{\Sigma}_i^{h-1}\rightarrow\mathbf{\Sigma}_i^h}(\mathbf{u}_i^{h-1}) + (1-\beta_{1}^h)\nabla_U(\mathbf{\Sigma}_i^h)$\\
		& & & & $\mathbf{v}_i^h = \beta_{2}\mathbf{v}_i^{h-1} + (1-\beta_{2})\nabla_E(\mathbf{\Sigma}_i^h)\nabla_E(\mathbf{\Sigma}_i^h)^T$\\
		& & & & $\mathrm{adp}_i^h = \max\{\mathbf{p}^T\mathbf{v}_i^h\mathbf{p},~\mathrm{adp}_i^{h-1}\}$\\
		& & & & $\mathbf{\Sigma}_i^{h+1} = \mathrm{Exp}( \nicefrac{-\alpha^h\mathbf{u}_i^h}{\sqrt{\mathrm{adp}_i^h}})$\\
		& & \multicolumn{3}{l}{\textbf{end for}}\\
		& \multicolumn{4}{l}{\textbf{end for}}\\
		\multicolumn{5}{l}{\textbf{Return}: $~\bm{\pi}^H,~ \{\bm{\mu}_i^H\}_{i=1}^k,~ \{\mathbf{\Sigma}_i^H\}_{i=1}^k$}\\\hline	
	\end{tabular}\label{algori}
}
\end{table}
\subsection{Adaptively Accelerated Algorithm}
On the basis of several accelerated Riemannian stochastic gradient descent methods which adopt the first-order moment information \cite{zhang2016first}, recently Becigneu and Ganea \cite{becigneul2018riemannian} further proposed a Riemannian adaptive method by employing the second-order moments. 
Although this adaptive stochastic gradient descent method \cite{becigneul2018riemannian} may be incorporated to our work to further improve the convergence, we argue that it only makes sense for updating vector parameters, i.e., $\bm{\pi}^h$ and $\bm{\mu}_i^h$, because in this case the second-order moment can be calculated element-wise by a decomposition into product manifolds \cite{becigneul2018riemannian}. This is similar to the \textit{Adam} algorithm in the Euclidean space \cite{kingma2014adam}. When updating the matrix parameter,  $\mathbf{\Sigma}_i^h$, however, it does not capture the second-order information properly due to the direct accumulation of the second-order moments over the whole matrix manifold; this slows down convergence in the optimisation. 

The key difficulty for the matrix case is that it is meaningless to accumulate second-order moments in an element-wise manner due to the structure within the matrix. We here view the second-order moments of matrices from another perspective, by realising that a positive definite matrix can be decomposed into a set of eigenvectors and the corresponding eigenvalues. The eigenvectors can be regarded as a set of projection directions and the eigenvalues are scalars that can be connected with the accumulation in those directions. Furthermore, we can see from Table \ref{basciops} that the Euclidean gradient $\nabla_E(\mathbf{\Sigma}_i^h)$ consists of a scalar weight multiplied by a rank-1 matrix characterised by the direction $\mathbf{p}\mathbf{p}^T$. 
Therefore, instead of the element-wise accumulation, we propose a direction-wise accumulation of second-order moments to adaptively adjust the stepsize when updating the matrix. The details of our algorithm in Algorithm 1, with our directional adaptive accelerated method (\textit{Dadam}) of updating $\mathbf{\Sigma}_i^h$ being achieved by $\mathbf{v}_i^h$ and $\mathrm{adp}_i^h$. 

Moreover, as there is no explicit parallel transport in the Wasserstein space, we propose a new means of vector transport  $\varphi_{\mathbf{\Sigma}_i^{h-1}\rightarrow\mathbf{\Sigma}_i^h}(\mathbf{u}_i^{h-1}) = \mathbb{L}_{\mathbf{\Sigma}_{i}^{h-1}}[\mathbf{u}_i^{h-1}]\mathbf{\Sigma}_{i}^{h} + \mathbf{\Sigma}_{i}^{h}\mathbb{L}_{\mathbf{\Sigma}_{i}^{h-1}}[\mathbf{u}_i^{h-1}]$ to accumulate the first-order moments.

As for the convergence analysis, because our goal is to minimise the Wasserstein distance, qualitatively, our manifold, defined by the Hessian of the approximate Wasserstein distance as in Lemma \ref{manifold}, implicitly involves the second-order information and accelerates the convergence. Furthermore, $\mathrm{adp}_i^h$ denotes the accumulation in the current projection $\mathbf{p}$, which is a scalar (for computational ease) and does not need any eigen-decomposition operations. This, on the one hand, avoids the high computational complexity of calculating any inverse of matrices, making our method the same computational complexity as that of \cite{becigneul2018riemannian} at each iteration. On the other hand, it also makes our convergence analysis similar to that of \cite{becigneul2018riemannian}, upon realising that the sectional curvature is automatically bounded from below; we omit the analysis here. 

\section{Experimental Results}
\begin{figure*}[!htb]
	\begin{center}
		\includegraphics[width=0.7\textwidth]{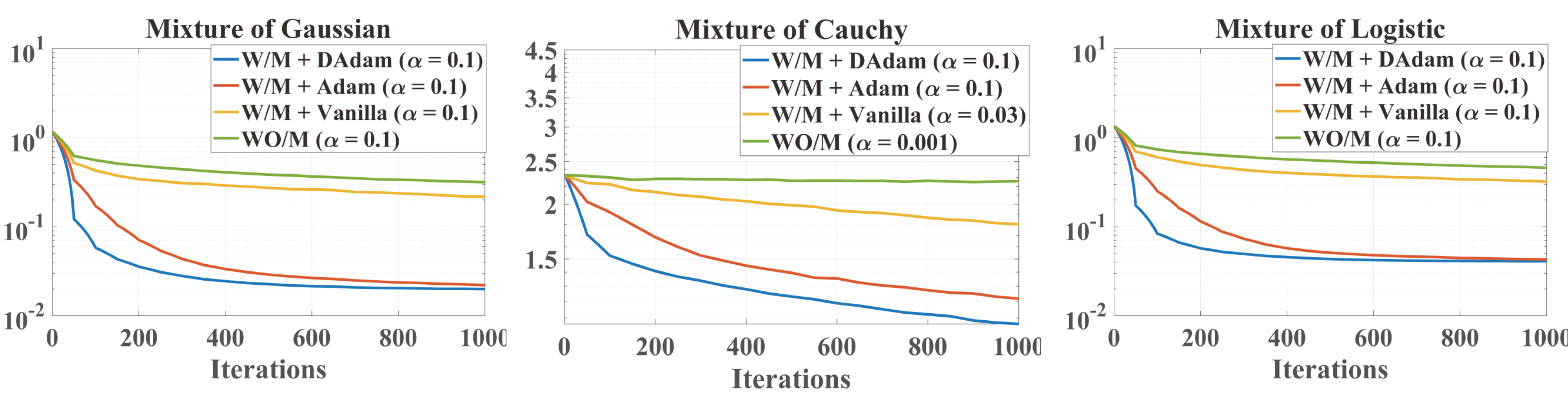}
	\end{center}
	\caption{The Wasserstein distance against the number of iterations for the four considered algorithms, averaged over the three types of datasets ($\{m, k\} = \{2,3\}, \{8,9\}, \{16,27\}$). The best learning rate is shown in the legend. \textit{1000} iterations are for illustration purpose. The computational time per iteration is discussed in Table \ref{random_trials}.}\label{convergence}
\end{figure*}
\begin{figure*}[!htb]
	\begin{center}
		\includegraphics[width=0.8\textwidth]{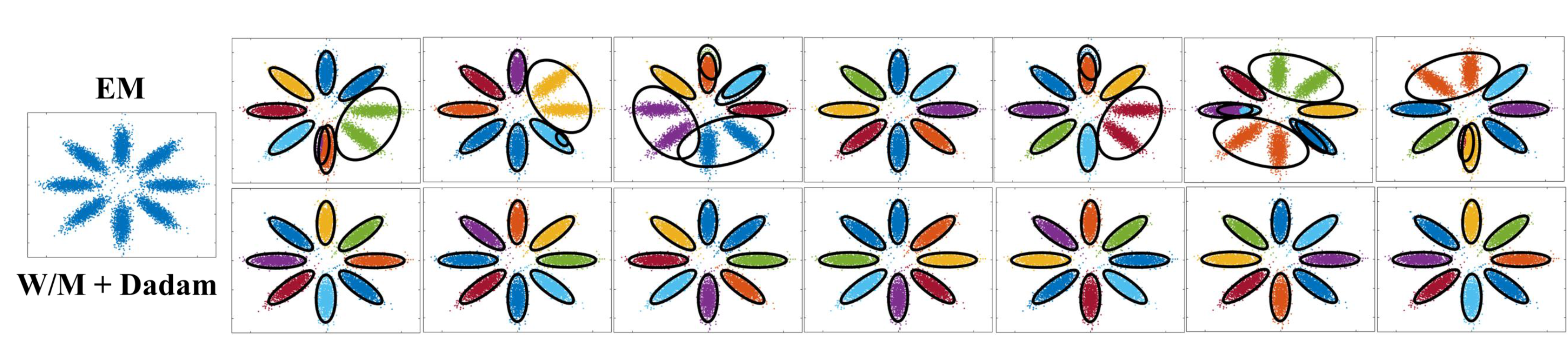}
	\end{center}
	\caption{The optimised GMMs via the EM and our (\textit{W/M+Dadam}) methods on a flower-shaped synthetic data ($N=10,000$) with random initialisations. Each cluster in the flower-shaped data is composed by Gaussian-distributed samples.}\label{random_trials}
\end{figure*}
\begin{table*}[!htb]
	\centering
	\caption{Comparisons among our, WO/M and EM methods for GMMs by varying $m$ and $k$.}
	\scriptsize{\begin{tabular}{|c|ccc|ccc|ccc|}
			\hline
			& \multicolumn{3}{c|}{$m=2,k=3$}            & \multicolumn{3}{c|}{$m=8,k=9$}             & \multicolumn{3}{c|}{$m=16,k=27$}           \\\hline
			& Wass            & NLL       & Time per Ite.     & Wass            & NLL        & Time per Ite.        &  Wass          & NLL     & Time per Ite.           \\\hline
			W/M+Dadam  & \textbf{0.01} $\pm$ \textbf{0.00} & \textbf{5.10} $\pm$ \textbf{0.00} & 4.35ms  & \textbf{0.03} $\pm$ \textbf{0.03} & \textbf{19.49} $\pm$ \textbf{0.03} & 9.83ms  & \textbf{2.39} $\pm$\textbf{ 0.33} & \textbf{44.92} $\pm$ \textbf{1.04}  &  32.51ms\\
			WO/M & 0.03 $\pm$ 0.00 & 5.12 $\pm$ 0.00  &  4.23ms & 4.26 $\pm$ 3.61 & 22.27 $\pm$ 1.69  & 9.64ms  & 5e3 $\pm$ 589   & Inf   &    29.77ms         \\
			EM   & 0.65 $\pm$ 1.06 & 5.30 $\pm$ 0.35  & 3.10ms  & 0.70 $\pm$ 0.46 & 19.83 $\pm$ 0.29  &  13.92ms & 2.51 $\pm$ 1.27 & 48.84 $\pm$ 1.24 & 215.00ms \\\hline 
		\end{tabular}\label{GMMcomp}}
\end{table*}

\begin{table*}[!htb]
	\centering
	\caption{Comparison performance of our, WO/M and IRA algorithms over the BSDS500 dataset via five metrics}
	\scriptsize{
		\begin{tabular}{|c|lll|lll|lll|lll|}
			\hline
			& \multicolumn{3}{c|}{Gaussian Mixture} & \multicolumn{3}{c|}{Logistic Mixture} & \multicolumn{3}{c|}{Cauchy Mixture} & \multicolumn{3}{c|}{Gamma Mixture} \\
			& Our    & WO/M    & IRA   & Our    & WO/M    & IRA   & Our   & WO/M   & IRA   & Our   & WO/M   & IRA  \\\hline
			\multirow{2}{*}{Wass} &   \textbf{35.0}     &     89.5        &    53.6   &    \textbf{45.8}    &     154        &   68.0    &   \textbf{179}    &    211        &    253   &   \textbf{35.5}    &    510        &   --   \\
			&   $\pm$\textbf{4.68}     &    $\pm$10.0         &  $\pm$21.9     &   $\pm$12.7     &    $\pm$\textbf{10.9}         &  $\pm$24.2     &   $\pm$15.3    &   $\pm$\textbf{7.77}         & $\pm$65.4     &   $\pm$\textbf{1.82}    &     $\pm$71.2       &   --   \\\hline
			\multirow{2}{*}{NLL}  &   \textbf{11.8}     &     12.2        &  11.8    &   \textbf{10.6}     &      12.5       &   10.6    &  12.1     &    12.3        &   \textbf{12.0}    &   \textbf{13.3}    &     19.1       &   --   \\
			&   $\pm$\textbf{0.01}     &    $\pm$0.11        &  $\pm$0.05     &   $\pm$\textbf{0.02}     &    $\pm$0.25        &   $\pm$0.06    &  $\pm$\textbf{0.01}     &      $\pm$0.01      &   $\pm$0.05    &   $\pm$\textbf{0.07}    &     $\pm$0.86      &  --    \\\hline
			{PSNR} &    \textbf{18.9}    &    18.6         &   18.3    &  \textbf{18.8}      &    18.2         &  17.8     &  19.3     &    \textbf{19.3}        &  18.4     &   \textbf{21.0}    &     18.0       &  --    \\
			(dB)&   $\pm$\textbf{0.07}     &     $\pm$0.18        &   $\pm$0.66    &    $\pm$\textbf{0.11}    &     $\pm$0.18        &   $\pm$0.67    &  $\pm$\textbf{0.05}     &    $\pm$0.06        &   $\pm$0.56    &   $\pm$\textbf{0.08}    &     $\pm$0.58       &   --   \\\hline
			\multirow{2}{*}{SSIM} &   \textbf{0.68}     &    0.65         &   0.66    &   \textbf{0.67}     &    0.62         &   0.63    &  \textbf{0.70}     &     0.70       &   0.69    &   \textbf{0.73}    &     0.59       &   --   \\
			&   $\pm$\textbf{0.00}     &      $\pm$0.01       &   $\pm$0.03    &   $\pm$\textbf{0.01}     &     $\pm$0.01        &  $\pm$0.03     &  $\pm$\textbf{0.00}     &    $\pm$0.00        &   $\pm$0.02    &  $\pm$\textbf{0.00}     &     $\pm$0.03       &  --    \\\hline
			FailR      &    \textbf{0.20}$\%$    &     53.1$\%$        &   2.98$\%$    &   \textbf{0.46}$\%$     &     75.5$\%$        &    2.68$\%$   &   \textbf{1.16}$\%$    &  0.82$\%$          &   17.16$\%$    &   \textbf{0}$\%$    &    2.02$\%$        &   100$\%$ \\\hline
		\end{tabular}\label{imagecomp}}
\end{table*}

We evaluated the effectiveness of our manifold and the proposed \textit{Dadam} on both synthetic data and image data, by employing $4$ EMMs, i.e., mixtures of \textit{Gaussian}, \textit{Logistic}, \textit{Cauchy} and \textit{Gamma} ($s=1, a=2, b = 0.5$ in Table \ref{typicalRES}).

\textbf{Synthetic Data:} Each synthetic dataset contains different mixtures of Gaussian distributions, with $10,000$ samples in total. Specifically, we employed three types of synthetic datasets, i.e., $\{m=2, k=3\}$, $\{m=8, k=9\}$ and $\{m=16, k=27\}$. Each type contains $10$ randomly generated mixture datasets (the eccentricity $\varepsilon$ and the separation $c$ of \cite{dasgupta1999learning} were equal to $10$). For every algorithm, we tested on each dataset over $10$ random initialisations and recorded the mean and standard deviation. Then, we averaged over all datasets to obtain the performance of each algorithm. 

\textbf{Image Data:} We adopted the MNIST \cite{lecun2010mnist} dataset as well as the BSDS500 \cite{amfm_pami2011} benchmark dataset in our evaluation. For the MNIST dataset, each image in both the training and testing sets was downsampled to $3\times3$ and $5\times5$; by vectorising each we obtained two test data with $(n\times m)$ being $(70,000\times9)$ and $(70,000\times25)$, respectively. The evaluation on the BSDS500 dataset is slightly different from that on the MNIST dataset, which enables to verify our method on some basic tasks such as the image reconstruction. Specifically, instead of modelling over the whole dataset, each image in the dataset was treated as one test data; this means we had $500$ test data in the BSDS500 dataset. Furthermore, each test data in both the MNIST and BSDS500 datasets was tested for every algorithm with $10$ random initialisations, where the mean and standard deviation were also recorded.

\textbf{Parameter Settings and Metrics:} We found the best learning rates $\alpha$ by searching from  $\{0.001,0.003,0.01,0.03,0.1,0.3\}$. Similar to \cite{kingma2014adam,becigneul2018riemannian}, $\beta_{11} = 0.9$ and $\beta_{2} = 0.999$. The maximum number of iterations for testing the synthesis data was $2,000$ and that for the image data was $10,000$. The Wasserstein distance (simplified as\textit{ Wass}) and averaged negative log-likelihood (\textit{NLL}) were mainly employed for comparison. The optimisation fail ratio (\textit{FailR}) was also reported to show the stability of optimisation under various initialisations. For image data, we adopted two well-known metrics, namely, the peak signal-to-noise ratio (\textit{PSNR}) and structural similarity index (\textit{SSIM}) \cite{wang2004image},  to evaluate the quality of reconstructed images via maximising the posterior of optimised EMMs.

\textbf{Baselines:} Each main part of our method was assessed. The vanilla Riemannian gradient descent method on our manifold is denoted by \textit{W/M + Vanilla}. The Riemannian \textit{Adam} of \cite{becigneul2018riemannian} with our manifold is \textit{W/M + Adam}. Our proposed \textit{Dadam} is denoted by \textit{W/M + Dadam} or \textit{Our} interchangeably. We also performed comparisons with the trivial gradient descent over the sliced Wasserstein distance without the manifold (denoted by \textit{WO/M}), \textit{which is the basics of \cite{kolouri2018sliced}}. The EM-type methods were also compared, which are denoted as \textit{EM} for GMMs and \textit{IRA} for other EMMs \cite{kent1991redescending}. All the methods were run on the Matlab 2017a under Intel Core(TM) i7-6700 CPU, where the time was recorded.

\subsection{Assessment over the \textit{Dadam} on Synthesis Data}
We first evaluated the effectiveness of our $Dadam$ in adaptively accelerating the convergence. In this part, we only optimised the $\mathbf{\Sigma}_i$ whilst setting the $\bm{\pi}$ and $\bm{\mu}_i$ as the ground truth of the synthesis data. Fig. \ref{convergence} illustrates the convergence speed; observe that the optimisation on our established manifold exhibits much faster convergence. Furthermore, both the adaptive methods of \cite{becigneul2018riemannian} and the $Dadam$ were shown to boost the speed. The proposed $Dadam$ achieved the fastest convergence, which verifies the effectiveness of our \textit{Dadam}.

\subsection{Overall Comparisons on Synthesis Data}
We next compared our algorithm (i.e., \textit{W/M+Dadam}) with the \textit{WO/M} and the \textit{EM} methods under GMM problems. Similar results can be found for other EMMs; the result for GMMs is given in Table \ref{GMMcomp}. It should be pointed out that all parameters ($\bm{\pi}$, $\bm{\mu}_i$ and $\mathbf{\Sigma}_i$) were optimised in this test and each result was reported via the corresponding best learning rates. From this table, observe that our algorithm not only achieves an extremely stable estimation (the lowest deviations) but also the lowest values of both the Wasserstein distance and the NLL. An illustrative example can be found in Fig. \ref{random_trials}, in which our algorithm consistently achieves the optimal clustering but the optimised GMMs via the \textit{EM} are highly unstable. From Table \ref{GMMcomp}, we should also point out that although our algorithm implements a set of manifold operations, the computational burden over that without manifold optimisation (i.e., \textit{WO/M}) is in average around $5\%$. However, our algorithm enjoys much faster speed compared to the \textit{EM} algorithm per iteration. 
\begin{figure}[!t]
	\begin{center}
		\includegraphics[width=0.6\columnwidth]{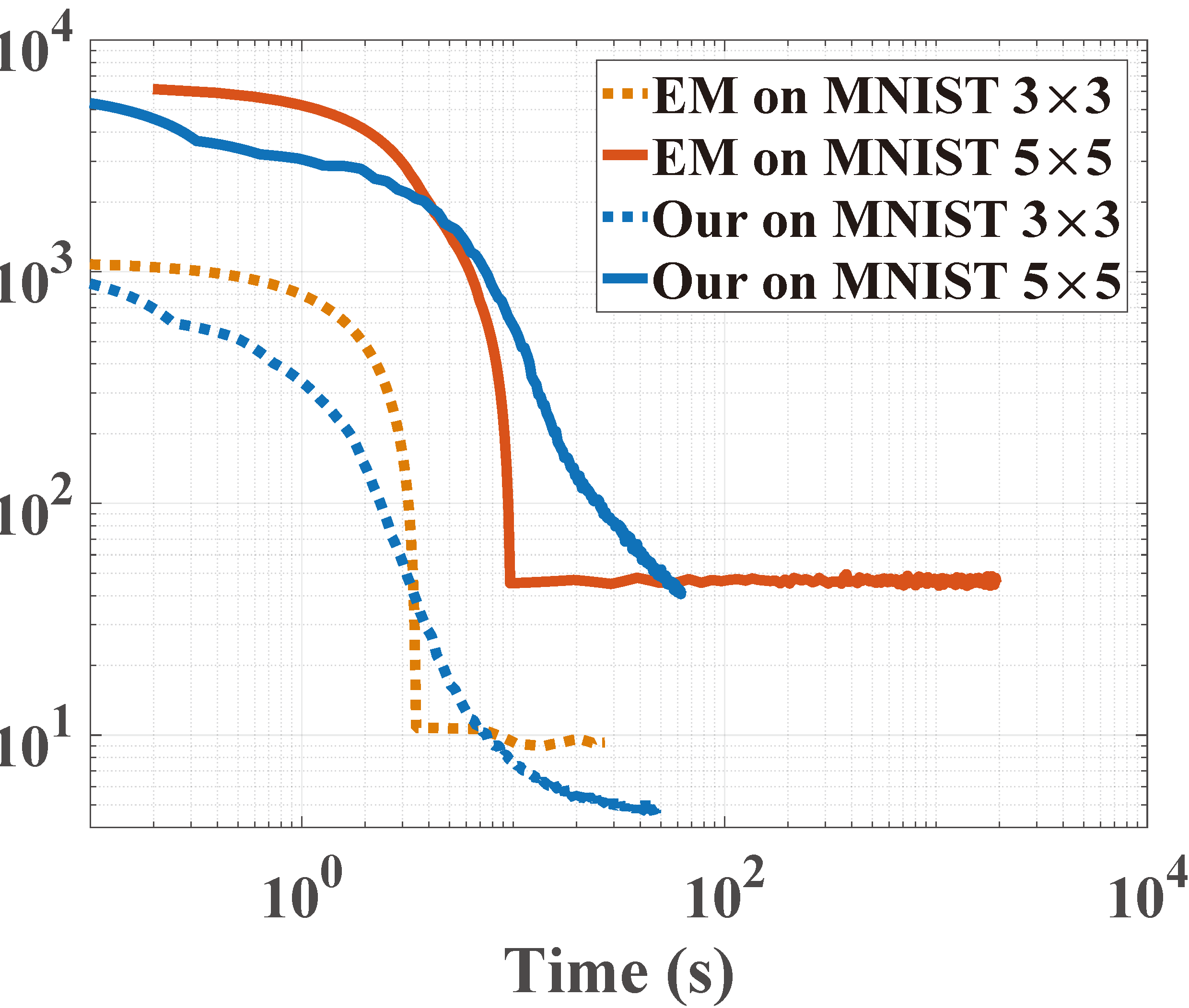}
	\end{center}
	\caption{Wasserstein distance against the computational time for the \textit{EM} and \textit{Our}, on the downsampled $3\times3$ and $5\times5$ MNIST test data. The best learning rate is chosen as $\alpha=0.1$. The cluster number is chosen as $k=3$. The maximal iterations are set to $10,000$ for both methods.}\label{mnist}
\end{figure}

\subsection{Overall Comparisons on the MNIST}
We further compared the computational complexity of our algorithm (i.e., \textit{W/M+Dadam}) with the \textit{EM} method, besides the time per iteration reported in Table \ref{GMMcomp}. Fig. \ref{mnist} plots the decrease of Wasserstein loss with regard to the computational time when optimising GMMs and the trends for other EMMs are the same. We can see clearly from this figure that our algorithm consistently achieves the lowest Wasserstein cost. The computational time of our algorithm is comparable to that of the \textit{EM} algorithm when $m=3\times3=9$; our algorithm, however, requires much less time when in higher dimensions such as $m=5\times5=25$. We shall also need to point out that the random projections in our algorithm can be implemented parallelly, which can further accelerate the convergence speed of our algorithm.

\subsection{Overall Comparisons on the BSDS500}
Finally, we evaluated our algorithm on the BSDS500 dataset, with the results shown in  Table \ref{imagecomp} shows the results. From this table, we see some improvements of using the Wasserstein distance instead of the KL divergence, by comparing the \textit{WO/M} with the \textit{IRA}. However, due to the probability constraint, the \textit{WO/M} is highly unstable and its fail ratio reaches $>50\%$ for the Gaussian mixture and Logistic mixture. More importantly, by optimising on our established manifold together with our \textit{Dadam} method (denoted as \textit{Our}), we can consistently achieve best performances over the five metrics. Again, our algorithm also achieve highly stable estimation, taking advantages from the Wasserstein distance, compared to the \textit{IRA} which operates under the KL. Another advantage of our algorithm is its universal convergence, while the \textit{IRA} does not converge for the mixture of \textit{Gamma}, which limits the flexibility of EMMs.

\section{Conclusion}
We have introduced a new and complete framework for solving general EMMs. To this end, we have first provided a unified and easy-to-use summary of the candidates for identifiable EMMs, which is achieved via a stochastic representation. Then, an approximate Wasserstein distance for EMMs has been proposed, which unlike the existing Wasserstein distances allows the corresponding metrics to be explicitly calculated. The so established manifold has been shown to consistently improve performances in terms of Wasserstein cost and even the NLL cost, and also significantly stabilise the optimisation on EMMs, making them robust to initialisations. We have further proposed a directional adaptively accelerated algorithm to enhance and stabilise the convergence, the performance of which has been validated through comprehensive experimental results.

\section{Appendix}
\subsection{Proof for expressions in Table \ref{typicalRES}}\label{proftable1}
Consider the term $p_{\mathcal{R}}(t)= g(t^2) \cdot t^{m-1}$. The pdf of $\mathcal{R}^2$ is then obtained accordingly as,
\begin{equation}
p_{\mathcal{R}^2}(t)= \frac{1}{2}\cdot g(t) \cdot t^{\nicefrac{m}{2}-1}.
\end{equation}

The term $g(t)$ can be further decomposed as $c\cdot g_c(t)$, where $g_c(t)$ is a nuclear term that only relates to $t$, and $c$ is a normalisation term of $p_{\mathcal{R}}(t)$. This allows us to prove the results in Table 1 related to computable elliptical probability distribution functions (pdf) as follows. 
\begin{itemize}
	\item \textbf{Kotz type distributions:} The nuclear term of density generator is $g_c(t) = t^{a-1} \mathrm{exp}(-bt^s)$ and $c$ can be calculated from  the condition  $\int p_{\mathcal{R}}(t) dt = \int g(t^2) \cdot t^{m-1} dt = 1$ as,
	\begin{equation}
	c = ( \int_{0}^{\infty} t^{2a+m-3} \mathrm{exp}(-bt^{2s}) dt)^{-1} =  \frac{2sb^{\frac{2a+m-2}{2s}}}{\Gamma(\frac{2a+m-2}{2s})}.
	\end{equation}
	We now arrive at the pdf of $\mathcal{R}^2$ for the Kotz type distributions, in the form
	\begin{equation}\label{Kotzpdf}
	p_{\mathcal{R}^2}(t) = \frac{sb^{\frac{2a+m-2}{2s}}}{\Gamma(\frac{2a+m-2}{2s})}t^{\frac{m}{2}+a-2}\mathrm{exp}(-bt^s).
	\end{equation}
	However, in practice, it is intractable to generate samples of $\mathcal{R}^2$ as in \eqref{Kotzpdf}. On the other hand, it can be found that $\mathcal{G}^{1/s}$ has the same distribution as $\mathcal{R}^2$ when $\mathcal{G}$ is Gamma distributed. Specifically, when $\mathcal{G}\sim \mathrm{Ga}(\frac{2a+m-2}{2s}, b)$, we obtain the pdf of $\mathcal{G}^{1/s}$ as,
	\begin{equation}
	\begin{aligned}
	p_{\mathcal{G}^{1/s}}(t) &= s t^{s-1} \frac{b^{\frac{2a+m-2}{2s}}t^{\frac{2a+m-2}{2}-s}\mathrm{exp}(-bt^s)}{\Gamma(\frac{2a+m-2}{2s})}\\
	& = \frac{sb^{\frac{2a+m-2}{2s}}}{\Gamma(\frac{2a+m-2}{2s})}t^{\frac{m}{2}+a-2}\mathrm{exp}(-bt^s).
	\end{aligned}
	\end{equation}
	This proves that $\mathcal{R}^2=^d\mathcal{G}^{1/s}$ for the Kotz type distributions.
	
	\item \textbf{Scale mixture of normals distributions:} The scale mixture of normal distributions consists of a mixture of zero-mean normal distributions (denoted as $\bm{\mathcal{X}} = \sqrt{\mathcal{K}}\bm{\mathcal{N}}$, where $\bm{\mathcal{N}}$ is zero-mean normal distribution and $\mathcal{K}$ is called the mixing distribution with pdf $p_\mathcal{K}(t)$). Correspondingly, we can write the pdf of $\bm{\mathcal{X}}$ as
	\begin{equation}\label{scalenorm}
	p_{\bm{\mathcal{X}}}(\mathbf{x})\! =\! \int_{t}\!p_{\bm{\mathcal{X}}}(\mathbf{x}|t)p_{\mathcal{K}}(t)dt \!\propto\!\! \int_{t} \!t^{-\frac{m}{2}}\mathrm{exp}(\frac{\mathbf{x}^T \mathbf{\Sigma}^{-1} \mathbf{x}}{t})p_{\mathcal{K}}(t)dt.
	\end{equation}
	Furthermore, to represent a scale mixture of normal distributions in the form of stochastic representations, by inspection we see that a normal distribution, $\bm{\mathcal{N}}$, can be represented by a multiplication of $\sqrt{\chi_m^2}$ and $\bm{\mathcal{S}}$ (from the stochastic representation), so that the following holds:
	\begin{equation}\label{scalenormsto}
	\bm{\mathcal{X}} = \sqrt{\mathcal{K}}\bm{\mathcal{N}} = \sqrt{\mathcal{K} \cdot \chi_m^2}\cdot \bm{\mathcal{S}} = \sqrt{\mathcal{K} \cdot \mathcal{G}}\bm{\mathcal{S}},
	\end{equation}
	where $\mathcal{G}\sim \mathrm{Ga}(m/2,2)$. In this case, $\mathcal{R}^2 =^d \mathcal{K} \cdot \mathcal{G}$ by the stochastic representation. Therefore, although \eqref{scalenorm} and \eqref{scalenormsto} are equivalent, \eqref{scalenormsto} provides a unified stochastic representation within the elliptical family. This allows us to discuss different $\mathcal{K}$ for different sub-classes of scale mixture normals in the following.
	
	For the Pearson type VII distributions, where the nuclear density generator is given by $g_c(t) = (1+\nicefrac{t}{v})^{-s}$, we can obtain the corresponding normalisation term $c$ as follows,
	\begin{equation}
	\begin{aligned}
	c &= (\int_{0}^{\infty} t^{m-1} (1+t^2/v)^{-s} dt)^{-1}\\
	 &= \frac{2\Gamma(s)}{( v)^{m/2}\Gamma(s-m/2)\Gamma(m/2)}.
	\end{aligned}
	\end{equation}
	
	Consequently, the pdf of $\mathcal{R}^2$ becomes:
	\begin{equation}\label{VIIIr}
	p_{\mathcal{R}^2}(t) = \frac{\Gamma(s)}{\Gamma(s-\frac{m}{2})\Gamma(\frac{m}{2})v^{\frac{m}{2}}}(1+\frac{t}{v})^{-s}t^{\frac{m}{2}-1}.
	\end{equation}
	
	Again, the rather complicated form of \eqref{VIIIr} makes it impossible to generate samples. It can be found that when $p_\mathcal{K}(t) = \frac{(\frac{v}{2})^{s-\nicefrac{m}{2}}}{\Gamma(s-\nicefrac{m}{2})}t^{\nicefrac{m}{2}-s-1}\mathrm{exp}(-\nicefrac{v}{2t})$ and $\mathcal{G}\sim \mathrm{Ga}(m/2,2)$, $\mathcal{R}^2$ in \eqref{VIIIr} has the same pdf as $\mathcal{K} \cdot \mathcal{G}$. This can be verified as follows,
	\begin{equation}
	\begin{aligned}
	&p_{\mathcal{R}^2}(t) = \int_{0}^{\infty}p_\mathcal{G}(t|\tau)p_\mathcal{K}(\tau)d\tau \\
	&\!=\! \int_{0}^{\infty}\!\! \frac{(\frac{2}{\tau})^{\frac{m}{2}}t^{\frac{m}{2}-1}\mathrm{exp}(-\frac{t}{2\tau})}{\Gamma(\frac{m}{2})}\frac{(\frac{v}{2})^{s-\frac{m}{2}}}{\Gamma(s-\frac{m}{2})}\tau^{\frac{m}{2}-s-1}\mathrm{exp}(-\frac{v}{2\tau})d\tau\\
	&=\frac{\Gamma(s)t^{\frac{m}{2}-1}}{\Gamma(s-\frac{m}{2})\Gamma(\frac{m}{2})v^{\frac{m}{2}}(1+\frac{t}{v})^{s}},
	\end{aligned}
	\end{equation}
	where $\mathcal{K}^{-1}\sim \mathrm{Ga}(s - \frac{m}{2}, \frac{v}{2})$ and $\mathcal{R}^2 =^d \mathcal{K} \cdot \mathcal{G}$.
	
	Moreover, for other types within the scale mixture normals, Barndorff \textit{et al.} \cite{barndorff1982normal} have proved that the generalised hyperbolic distributions can be formulated in the form of \eqref{scalenorm} when $\mathcal{K}$ satisfies the inverse-Gaussian distribution. The elliptical logistic distributions are also mixtures of normals, where $\sqrt{\mathcal{K}}$ in \eqref{scalenorm} relates to the Kolmogorov-Smirnov distribution \cite{stefanski1991normal}. Besides, the relationship between the scale mixture normals and the $\alpha$-stable distribution is given in \cite{andrews1974scale}, which satisfies our equivalent form of \eqref{scalenormsto}. We omit the tedious proofs here.
	
	\item \textbf{Pearson type II distributions:} The nuclear density generator is $g_c(t) = (1-t)^{s-1}$ with the constraint $t\in[0,1]$. We omit the proof here because the calculation of $c$ and the pdf of $\mathcal{R}^2$ are well documented in \cite{fang2018symmetric}.
\end{itemize}
This completes the proof of Table \ref{typicalRES}.

\subsection{Proof of Theorem \ref{Theorem1}}\label{proftheorem1}
\textbf{Symmetry:} It is easy to verify the symmetry of $d_U(\bm{\mathcal{Y}}_1,\bm{\mathcal{Y}}_2)$, i.e., $d_U(\bm{\mathcal{Y}}_1,\bm{\mathcal{Y}}_2)=d_U(\bm{\mathcal{Y}}_2,\bm{\mathcal{Y}}_1)$.

\noindent\textbf{Non-negativity:} Due to $\arccos(\cdot)\geq0$, $d^2_W(\bm{\mathcal{X}}_{i,1},\bm{\mathcal{X}}_{j,2})\geq0$ and $\gamma(i,j)\geq0$ for all $i,j$, $d_U(\bm{\mathcal{Y}}_1,\bm{\mathcal{Y}}_2)\geq0$. More importantly, the equality holds if and only if $\bm{\mathcal{Y}}_1=^d\bm{\mathcal{Y}}_2$. In this case,  $\min_{\gamma(i,j)}\sum_{i,j}\frac{\gamma(i,j)}{k}d^2_W(\bm{\mathcal{X}}_{i,1},\bm{\mathcal{X}}_{j,2}) = 0$ and $\min_{\gamma(i,j)}\arccos(\sum_{i,j}\gamma(i,j)\sqrt{\pi_{i,1}\pi_{j,2}}) = \arccos(\sum_{i}{\pi_{i,1}}) = \arccos(1)=0$, which results in $d_U(\bm{\mathcal{Y}}_1,\bm{\mathcal{Y}}_2) = 0$. 

\noindent\textbf{Triangle inequality:} Because $\gamma(i,j)$ is binary $\in \{0,1\}$ for each pair $\{i,j\}$, and it satisfies $\sum_{i=1}^{k}\gamma(i,j) = 1$ and $\sum_{j=1}^{k}\gamma(i,j) = 1$, the $\gamma(i,j)$ operates as a bijection between the elliptical distributions within the first and second EMMs.

Then, in order to prove the triangle property, we denote the third EMM as $\bm{\mathcal{Y}}_3=^d\sum_{h=1}^{k}z_{h,3}\bm{\mathcal{X}}_{h,3}$, and investigate the relationship between $d_U(\bm{\mathcal{Y}}_1,\bm{\mathcal{Y}}_2)$ and $d_U(\bm{\mathcal{Y}}_1,\bm{\mathcal{Y}}_3)+d_U(\bm{\mathcal{Y}}_2,\bm{\mathcal{Y}}_3)$. We use $\gamma^*(\cdot,\cdot)$ to denote the optimal $\gamma(\cdot,\cdot)$ in the defined distance $d_U(\cdot,\cdot)$, and also define the following function
\begin{equation}
\begin{aligned}
&f\big(\bm{\mathcal{Y}}_1,\bm{\mathcal{Y}}_2, \gamma(i,j)\big) = \\
& \sum_{i,j}\gamma(i,j)d^2_W(\bm{\mathcal{X}}_{i,1},\bm{\mathcal{X}}_{j,2})+\arccos(\sum_{i,j}\gamma(i,j)\sqrt{\pi_{i,1}\pi_{j,2}}).
\end{aligned}
\end{equation}
By Definition \ref{def}, $\min_{\gamma(i,j)} f(\bm{\mathcal{Y}}_1,\bm{\mathcal{Y}}_2, \gamma(i,j)) = f(\bm{\mathcal{Y}}_1,\bm{\mathcal{Y}}_2, \gamma^*(i,j)) = d_U(\bm{\mathcal{Y}}_1,\bm{\mathcal{Y}}_2)$. 

More importantly, for two arbitrary $\gamma(i,h)$ and $\gamma(h,j)$, their combination $\gamma(i,h)\cap\gamma(h,j) = \sum_{h=1}^k\gamma(i,h)\cdot\gamma(h,j)$ also formulates a transport plan $\gamma(i,j)$ because  $\gamma(i,h)\cap\gamma(h,j)$ is still binary and $\sum_{i=1}^{k}\gamma(i,h)\cap\gamma(h,j) = \sum_{j=1}^{k}\gamma(i,h)\cap\gamma(h,j) = 1$.

We therefore now arrive at
\begin{equation}\label{proofdistnace}
\begin{aligned}
&d_U(\bm{\mathcal{Y}}_1,\bm{\mathcal{Y}}_3) + d_U(\bm{\mathcal{Y}}_2,\bm{\mathcal{Y}}_3) \\
&= f(\bm{\mathcal{Y}}_1,\bm{\mathcal{Y}}_3, \gamma^*(i,h)) + f(\bm{\mathcal{Y}}_2,\bm{\mathcal{Y}}_3, \gamma^*(j,h))\\ 
&=\frac{1}{k}\big(\!\sum_{i,h}\!\gamma^*(i,h)d^2_W(\bm{\mathcal{X}}_{i,1},\!\bm{\mathcal{X}}_{h,3})\! +\!\! \sum_{h,j}\!\gamma^*(h,j)d^2_W(\bm{\mathcal{X}}_{h,3},\!\bm{\mathcal{X}}_{j,2})\big) \\
&~\!+ \!\arccos(\!\sum_{i,h}\!\gamma^*(i,h)\sqrt{\pi_{i,1}\pi_{h,3}})\! +\! \arccos(\!\sum_{h,j}\!\gamma^*(h,j)\sqrt{\pi_{h,3}\pi_{j,2}})\\
&\geq \frac{1}{k}\sum_{i,j}\big(\gamma^*(i,h)\cap\gamma^*(h,j)\big)d^2_W(\bm{\mathcal{X}}_{i,1},\bm{\mathcal{X}}_{j,2}) \\
&~~+ \arccos\big(\sum_{i,j}(\gamma^*(i,h)\cap\gamma^*(h,j))\sqrt{\pi_{i,1}\pi_{j,2}}\big)\\
&=f\big(\bm{\mathcal{Y}}_1,\bm{\mathcal{Y}}_2, \gamma^*(i,h)\cap\gamma^*(h,j)\big)\\
&\geq f\big(\bm{\mathcal{Y}}_1,\bm{\mathcal{Y}}_2, \gamma^*(i,j)\big) \\
&= d_U(\bm{\mathcal{Y}}_1,\bm{\mathcal{Y}}_2).
\end{aligned}
\end{equation}
In \eqref{proofdistnace}, the first inequality holds due to the fact that both $d^2_W(\bm{\mathcal{X}}_{i,1},\bm{\mathcal{X}}_{j,2})$ and $\arccos(\sum_{i,j}\gamma(i,j)\sqrt{\pi_{i,1}\pi_{j,2}})$ satisfy the triangle property. Moreover, the second inequality is due to the fact that the combined plan $\gamma(i,h)\cap\gamma(h,j)$  is not necessarily the optimal plan between $\bm{\mathcal{Y}}_1$ and $\bm{\mathcal{Y}}_2$, as the optimal plan $\gamma^*(i,j)$ achieves the minimum and defines the distance of $d_U(\bm{\mathcal{Y}}_1,\bm{\mathcal{Y}}_2)$.

This completes the proof of Theorem \ref{Theorem1}.

\subsection{Proof of Lemma \ref{Lemma1}}\label{proflemma1}
Recall that $\bm{\mathcal{Y}}\sim\sum_{i=1}^{k} \pi_i \mathcal{E}(\mathbf{x}|\bm{\mu}_i, \mathbf{\Sigma}_i, \mathcal{R})$. We can now write the definition of the Wasserstein distance in the form
\begin{equation}\label{wasscom1}
d^2_W(\bm{\mathcal{Y}}_1,\bm{\mathcal{Y}}_2) \!=\!\! \inf_{\eta(\bm{\mathcal{Y}}_1,\bm{\mathcal{Y}}_2)}\!\int_{m\times m}\! \eta(\bm{\mathcal{Y}}_1,\bm{\mathcal{Y}}_2)||\mathbf{x}_1 - \mathbf{x}_2||_2^2d\mathbf{x}_1d\mathbf{x}_2,
\end{equation}
where $\eta(\bm{\mathcal{Y}}_1,\bm{\mathcal{Y}}_2)$ denotes the joint distribution between $\bm{\mathcal{Y}}_1$ and $\bm{\mathcal{Y}}_2$, and satisfies $\int_m\eta(\bm{\mathcal{Y}}_1,\bm{\mathcal{Y}}_2) d\mathbf{x}_1 = p_{\bm{\mathcal{Y}}_2}(\mathbf{x}_2)$ and $\int_m\eta(\bm{\mathcal{Y}}_1,\bm{\mathcal{Y}}_2) d\mathbf{x}_2 = p_{\bm{\mathcal{Y}}_1}(\mathbf{x}_1)$.

More importantly, 
\begin{equation}\label{wasscom2}
\begin{aligned}
&\sum_{i,j}\frac{\gamma^*(i,j)}{k}d^2_W(\bm{\mathcal{X}}_{i,1},\bm{\mathcal{X}}_{j,2}) =\\
& \!\sum_{i,j}\!\frac{\gamma^*(i,j)}{k}\!\!\inf_{\eta(\bm{\mathcal{X}}_{i,1},\bm{\mathcal{X}}_{j,2})}\!\int_{m\times m}\! \eta(\bm{\mathcal{X}}_{i,1},\bm{\mathcal{X}}_{j,2})||\mathbf{x}_1\! -\! \mathbf{x}_2||_2^2d\mathbf{x}_1d\mathbf{x}_2\\
& = \int_{m\times m} \sum_{i,j}\frac{\gamma^*(i,j)}{k}\eta^*(\bm{\mathcal{X}}_{i,1},\bm{\mathcal{X}}_{j,2})||\mathbf{x}_1 - \mathbf{x}_2||_2^2d\mathbf{x}_1d\mathbf{x}_2,
\end{aligned}
\end{equation}
where $\eta^*(\bm{\mathcal{X}}_{i,1},\bm{\mathcal{X}}_{j,2})$ is the optimal plan that achieves the Wasserstein distance between $\bm{\mathcal{X}}_{i,1}$ and $\bm{\mathcal{X}}_{j,2}$. Therefore, by comparing \eqref{wasscom1} and \eqref{wasscom2}, we can easily observe that $\sum_{i,j}\frac{\gamma^*(i,j)}{k}\eta^*(\bm{\mathcal{X}}_{i,1},\bm{\mathcal{X}}_{j,2})$ consists a subset of joint distribution of $\eta(\bm{\mathcal{Y}}_1,\bm{\mathcal{Y}}_2)$, due to $\pi_{i,1} = \pi_{j,2} = \nicefrac{1}{k}$. In other words, because of the factorisation from $\bm{\mathcal{Y}}$ to $\bm{\mathcal{X}}$, the $\sum_{i,j}\frac{\gamma^*(i,j)}{k}\eta^*(\bm{\mathcal{X}}_{i,1},\bm{\mathcal{X}}_{j,2})$ does not necessarily achieve the optimal transport plan between $\bm{\mathcal{Y}}_1$ and $\bm{\mathcal{Y}}_2$. 

Moreover, for balanced EMMs, the following holds for arbitrary $\gamma(i,j)$,
\begin{equation}
\arccos(\sum_{i,j}\gamma(i,j)\sqrt{\pi_{i,1}\pi_{j,2}}) = 0.
\end{equation}

Thus, we have
\begin{equation}
d^2_W(\bm{\mathcal{Y}}_1,\bm{\mathcal{Y}}_2) \leq \sum_{i,j}\frac{\gamma^*(i,j)}{k}d^2_W(\bm{\mathcal{X}}_{i,1},\bm{\mathcal{X}}_{j,2}) = d_U(\bm{\mathcal{Y}}_1,\bm{\mathcal{Y}}_2).
\end{equation}

This completes the proof of Lemma \ref{Lemma1}.

\subsection{Proof of Lemma \ref{manifold}}\label{proflemma2}
Due to the fact that $\gamma(i,j)$ in Definition 1 is a bijection between the mixture components in $\bm{\mathcal{Y}}_1$ and $\bm{\mathcal{Y}}_2$, for each $\bm{\pi}_{1}$, $\bm{\mu}_{i,1}$ and $\mathbf{\Sigma}_{i,1}$ in $\bm{\mathcal{Y}}_1$, there exist only one corresponding $\bm{\pi}_{2}$, $\bm{\mu}_{j,2}$ and $\mathbf{\Sigma}_{j,2}$, respectively, in $\bm{\mathcal{Y}}_2$. In other words, when casting the problem onto the parameter space, the Hessian of $d_U(\bm{\mathcal{Y}}_{\bm{\theta}}, \bm{\mathcal{Y}}_{\bm{\theta}+s\Delta\bm{\theta}})$ can be calculated as follows,
\begin{equation}\label{prodm}
\begin{aligned}
&\frac{\partial^2 d_U(\bm{\mathcal{Y}}_{\bm{\theta}}, \bm{\mathcal{Y}}_{\bm{\theta}+s\Delta\bm{\theta}})}{\partial s^2}|_{s \rightarrow 0} \\
&= \frac{1}{k} \sum_{i=1}^K\frac{\partial^2 d^2_W(\bm{\mathcal{X}}_{\bm{\mu}_i, \mathbf{\Sigma}_i}, \bm{\mathcal{X}}_{\bm{\mu}_i + s\Delta \bm{\mu}_i, \mathbf{\Sigma}_i+s\Delta\mathbf{\Sigma}_i})}{\partial s^2}|_{s \rightarrow 0}\\
&~~~~ + \frac{\partial^2 \arccos(\sqrt{\bm{\pi}}^T(\sqrt{\bm{\pi}+s\Delta\bm{\pi}}))}{\partial s^2}|_{s\rightarrow 0},
\end{aligned}
\end{equation}
where the square root operation $\sqrt{\cdot}$ is performed element-wise.

In \eqref{prodm}, we can see that the manifold defined by $d_U(\bm{\mathcal{Y}}_{\bm{\theta}}, \bm{\mathcal{Y}}_{\bm{\theta}+\Delta\bm{\theta}})$ is a product manifold defined by $d^2_W(\bm{\mathcal{X}}_{\bm{\mu}_i, \mathbf{\Sigma}_i}, \bm{\mathcal{X}}_{\bm{\mu}_i + \Delta \bm{\mu}_i, \mathbf{\Sigma}_i+\Delta\mathbf{\Sigma}_i})$ and $\arccos(\sqrt{\bm{\pi}}^T(\sqrt{\bm{\pi}+\Delta\bm{\pi}}))$.

Furthermore, $\arccos(\sqrt{\bm{\pi}}^T(\sqrt{\bm{\pi}+\Delta\bm{\pi}}))$ defines a sphere manifold \cite{absil2009optimization} (Examples 3.5.1 and 3.6.1), for which basic operations are provided in Examples 4.1.1, 5.4.1 and 8.1.7 of \cite{absil2009optimization}. On the other hand, the Wasserstein distance between two elliptical distributions has an explicit representation as follows \cite{ghaffari2018multivariate,muzellec2018generalizing},
\begin{equation}\label{ellipdist}
\begin{aligned}
&d^2_W(\bm{\mathcal{X}}_{\bm{\mu}_i, \mathbf{\Sigma}_i}, \bm{\mathcal{X}}_{\bm{\mu}_i + \Delta \bm{\mu}_i, \mathbf{\Sigma}_i+\Delta\mathbf{\Sigma}_i}) =||\Delta\bm{\mu}_i||_2^2 \\
&\!+\! \frac{\mathbb{E}[\mathcal{R}^2]}{m}\mathrm{tr}(\mathbf{\Sigma}_{i} \!+\! (\mathbf{\Sigma}_i\!+\!\Delta\mathbf{\Sigma}_i) \!-\! 2(\mathbf{\Sigma}_{i}^{\nicefrac{1}{2}}(\mathbf{\Sigma}_i\!+\!\Delta\mathbf{\Sigma}_i)\mathbf{\Sigma}_{i}^{\nicefrac{1}{2}})^{\nicefrac{1}{2}}).
\end{aligned}
\end{equation}
We can thus conclude that the manifold for $\bm{\mu}_i$ is the conventional Euclidean space within $\mathbb{R}^m$. Moreover, the manifold of $\mathbf{\Sigma}_i$ can also be correspondingly obtained on the basis of \cite{malago2018wasserstein}.

This completes the proof of Lemma \ref{manifold}.

\subsection{Calculations within Table \ref{basciops}}\label{proftable2}
The basic operations on a sphere manifold were described in \cite{absil2009optimization} (Examples 3.5.1, 3.6.1, 4.1.1, 5.4.1 and 8.1.7). As the manifold for $\bm{\mu}_i$ is the conventional Euclidean space, the trivial gradient descent can be employed. Operations for $\mathbf{\Sigma}_i$ under the Wasserstein manifold can be found in \cite{malago2018wasserstein}. Furthermore, we omit the scale weight $\frac{\mathbb{E}[\mathcal{R}^2]}{m}$ in our work as it can be incorporated into the stepsize $\alpha$ during the gradient descent.

To calculate the Euclidean gradients, we first explicitly express our cost function, i.e., the sliced Wasserstein distance $d_{SW}(\bm{\mathcal{Y}}_{\bm{\theta}},\bm{\mathcal{Y}}^*)$, as follows,
\begin{equation}\label{costSW}
\begin{aligned}
&d_{SW}(\bm{\mathcal{Y}}_{\bm{\theta}},\bm{\mathcal{Y}}^*) = \\
&\!\int_m \!\inf_{\eta(\mathrm{Ra}(\bm{\mathcal{Y}}_{\bm{\theta}}, \mathbf{p}), \mathrm{Ra}(\bm{\mathcal{Y}}^*, \mathbf{p}))} \!\int_{\mathbb{R}\times\mathbb{R}} \!\eta_R\cdot (y_1\! -\! y_2)^2 \!dy_1dy_2 d\mathbf{p},\\
\end{aligned}
\end{equation}
and
\begin{equation}
\eta_R = \eta(\mathrm{Ra}(\bm{\mathcal{Y}}_{\bm{\theta}}, \mathbf{p}), \mathrm{Ra}(\bm{\mathcal{Y}}^*, \mathbf{p})),
\end{equation}
where $y_1$ denotes the Radon transform \cite{rabin2011wasserstein} $\mathrm{Ra}(\bm{\mathcal{Y}}_{\bm{\theta}}, \mathbf{p})$, which projects $\bm{\mathcal{Y}}_{\bm{\theta}}$ onto a one-dimensional random variable along the direction $\mathbf{p}$. Similarly, $y_2$ denotes the Radon transform of $\bm{\mathcal{Y}}^*$ with the direction $\mathbf{p}$. Recall that $\eta(\cdot,\cdot)$ is a joint distribution. Due to the fact that elliptical distributions belong to the location-scale family, as described by the stochastic representation, the Radon transform of each  $\bm{\mathcal{X}}_{\bm{\mu}_i,\mathbf{\Sigma}_i}$ has a simple representation in the form  ${\mathcal{X}}_{\mathbf{p}^T\bm{\mu}_i,\mathbf{p}^T\mathbf{\Sigma}_i\mathbf{p}} \sim \mathcal{E}(x; \mathbf{p}^T\bm{\mu}_i, \mathbf{p}^T\mathbf{\Sigma}_i\mathbf{p}, \mathcal{R})$, where $\mathcal{X}$ is a one-dimensional elliptical distribution. 

Fortunately, the one-dimensional Wasserstein distance has the closed-form solution \cite{kolouri2017optimal}, so that we can re-write \eqref{costSW} as
\begin{equation}\label{SWdis}
\begin{aligned}
&\min_{\bm{\pi},\bm{\mu}_i,\mathbf{\Sigma}_i}d_{SW}(\bm{\mathcal{Y}}_{\bm{\theta}},\bm{\mathcal{Y}}^*)= \\
&\min_{\bm{\pi},\bm{\mu}_i,\mathbf{\Sigma}_i}\!\int_m \!\int_{\mathbb{R}}\! |y\!-\!T(y)|^2 \sum_{i=1}^k \frac{c_m\pi_i}{(\mathbf{p}^T\mathbf{\Sigma}^h_i\mathbf{p})^{\frac{1}{2}}} g\!\left(\!\frac{(y \!-\! \mathbf{p}^T\bm{\mu}_i)^2}{\mathbf{p}^T\mathbf{\Sigma}^h_i\mathbf{p}}\!\right) \!dy d\mathbf{p},
\end{aligned}
\end{equation}
where $T(y)$ is the optimal transport plan between $\mathrm{Ra}(\bm{\mathcal{Y}}_{\bm{\theta}}, \mathbf{p})$ and $\mathrm{Ra}(\bm{\mathcal{Y}}^*, \mathbf{p})$, which can be explicitly obtained via their cumulative distribution functions \cite{chen2018natural}. When optimising \eqref{SWdis} using stochastic gradient descent, for each $\mathbf{p}$, we therefore minimise 
\begin{equation}\label{spSWdis}
\begin{aligned}
&\min_{\bm{\pi},\bm{\mu}_i,\mathbf{\Sigma}_i}J(\bm{\theta}, \mathbf{p}) = \\
&\min_{\bm{\pi},\bm{\mu}_i,\mathbf{\Sigma}_i} \int_{\mathbb{R}} |y-T(y)|^2 \sum_{i=1}^k \frac{c_m\pi_i}{(\mathbf{p}^T\mathbf{\Sigma}^h_i\mathbf{p})^{\frac{1}{2}}}g\!\left(\!\frac{(y\! -\! \mathbf{p}^T\bm{\mu}_i)^2}{\mathbf{p}^T\mathbf{\Sigma}^h_i\mathbf{p}}\!\right) \!dy.
\end{aligned}
\end{equation}
Then, by calculating the derivatives with regard to $\sqrt{\pi_i}$, $\bm{\mu}_i$ and $\mathbf{\Sigma}_i$, and upon introducing the Kantorovich potential $\phi(y)$ \cite{chen2018natural}, we obtain the Euclidean gradients as follows,
\begin{equation}
\begin{aligned}
&\nabla_{\sqrt{\pi_i},\bm{\mu}_i,\mathbf{\Sigma}_i} J(\bm{\theta}, \mathbf{p})= \\
&\!\int_\mathbb{R}\! \phi(y) \nabla_{\!\sqrt{\pi_i},\bm{\mu}_i,\mathbf{\Sigma}_i} \! \left(\!\sum_{i=1}^k \frac{c_m \sqrt{\pi_i}^2}{(\mathbf{p}^T\mathbf{\Sigma}^h_i\mathbf{p})^{\frac{1}{2}}}g\big(\frac{(y - \mathbf{p}^T\bm{\mu}_i)^2}{\mathbf{p}^T\mathbf{\Sigma}^h_i\mathbf{p}}\big)\right)dy,
\end{aligned}
\end{equation}
where $\nabla_{\sqrt{\pi_i},\bm{\mu}_i,\mathbf{\Sigma}_i}(\cdot)$ denotes the derivatives with regard to $\{\sqrt{\pi_i},\bm{\mu}_i,\mathbf{\Sigma}_i\}$, respectively. The Euclidean gradients in the Table \ref{basciops} can then be easily calculated.

\small
\bibliography{ShengxiLi}
\bibliographystyle{aaai}
\end{document}